\theoremstyle{plain}
\newtheorem{theorem}{Theorem}[section]
\theoremstyle{definition}
\newtheorem{definition}[theorem]{Definition}
\theoremstyle{remark}
\newcommand{\Tool}{\textsc{DINGO}\xspace}
\newcommand{\unconstrained}{Unconstrained\xspace}
\newcommand{\greedyconstrained}{Greedy Constrained\xspace}
\newcommand{\bestofbaseline}{Best of Greedy + Unconstrained \xspace}
\newcommand{\upto}{68\%}
\newcommand{\pad}[1]{P_{#1}}
\newcommand{\concat}{\cdot}
\newcommand{\vect}[1]{\pmb{#1}}
\newcommand{\transform}[1]{\mathcal{N}_{#1}}
\newcommand{\maskPred}[1]{\mathcal{M}_{#1}}
\newcommand{\lang}[1]{L(#1)}
\newcommand{\diffusion}[1]{f_{#1}}
\newcommand{\blocklen}{d}
\newcommand{\probstr}[1]{\mathbb{R}_{+}^{|\inalphaLLM{}|\times #1}}
\newcommand{\regex}{\mathcal{R}}
\newcommand{\pre}[1]{L_{P}(#1)}
\newcommand{\nomask}{(\inalphaLLM{}\setminus \bot)}
\newcommand{\llm}[1]{\mathcal{L}_{#1}}
\newcommand{\inalphaLLM}[1]{V^{#1}}
\newcommand{\decode}[1]{D_{m,#1}}
\newcommand{\constrained}[1]{D_{m,#1, \regex}}
\newcommand{\sub}[1]{\mathcal{S(#1)}}
\newcommand{\distri}[1]{\mathcal{D}_{#1}}
\newcommand{\opt}[1]{#1^{*}}
\newcommand{\dpstate}[2]{W[#1, #2]}
\newcommand{\dppar}[2]{Pr[#1, #2]}
\newcommand{\dfa}[1]{D_{#1}}
\newcommand{\dfastates}{Q}
\newcommand{\alphabets}{\Sigma}
\newcommand{\transitions}{\delta}
\newcommand{\tokentrans}[1]{\delta_{#1}}
\newcommand{\dfastart}{q_0}
\newcommand{\dfafinal}{F}
\newcommand{\dfadef}{(\dfastates, \alphabets, \transitions, \dfastart, \dfafinal)}
\newcommand{\dfadefr}[1]{(\dfastates, \alphabets, \tokentrans{\regex}, \dfastart, \dfafinal)}
\newcommand{\expstate}{2^{\dfastates}}
\newcommand{\live}[1]{#1_{l}}
\newcommand{\cost}[3]{V_{#3}(#1, #2)}
\newcommand{\costS}[1]{V_{#1}}
\newcommand{\tranS}[1]{T_{#1}}
\DeclareMathOperator*{\argmax}{arg\,max}
\title{\Tool: Constrained Inference for Diffusion LLMs}
\author{Tarun Suresh$^*$, Debangshu Banerjee\thanks{Equal contributing authors ordered randomly}, 
Shubham Ugare, Sasa Misailovic, Gagandeep Singh
 \\
Department of Computer Science, \\
University of Illinois Urbana-Champaign}
\begin{document}
\maketitle

\begin{abstract}
Diffusion LLMs have emerged as a promising alternative to conventional autoregressive LLMs, offering substantial potential for improving runtime efficiency. However, existing diffusion models fail to provably enforce user-specified formal constraints, such as regular expressions, which makes them unreliable for tasks that require structured outputs, such as fixed-schema JSON generation. Unlike autoregressive models, which generate tokens sequentially, diffusion LLMs predict a block of tokens in parallel. This parallelism makes traditional constrained decoding algorithms, designed to enforce constraints with sequential token prediction, ineffective at preserving the true output distribution. To address this limitation, we propose \Tool, a dynamic programming-based constrained decoding strategy that is both efficient and provably distribution-preserving. \Tool enables sampling of output strings with the highest probability under the model’s predicted distribution while strictly adhering to any user-specified regular expression. On standard symbolic math and JSON generation benchmarks, \Tool achieves up to a $\upto$ points of improvement over unconstrained inference.
\end{abstract}
\section{Introduction}
Autoregressive LLMs demonstrate impressive performance across a wide range of tasks, including logical reasoning~\citep{pan2023logiclmempoweringlargelanguage}, theorem proving~\citep{yang2023leandojotheoremprovingretrievalaugmented}, and code generation~\citep{chen2021evaluatinglargelanguagemodels}. 
However, because they generate one token at a time, they can be slow when producing long responses. 
Recent work has explored using diffusion models to accelerate token generation by predicting blocks of tokens in parallel. 
For tasks such as logical reasoning, where the LLM output is fed into symbolic solvers like Z3~\citep{fedoseev2024llm}, syntactic correctness of the output is essential. 
Prior works~\citep{poesia2022synchromesh, ugare2024improving, loula2025syntactic} have shown that LLMs frequently make syntactic and semantic errors, often generating structurally invalid outputs that cause downstream tasks to fail due to unparsable input. 
To mitigate this issue, constrained decoding has emerged as a promising approach that provably ensures structural correctness by projecting the LLM output onto a set of valid strings, typically defined by a regular grammar or, more generally, a context-free grammar (CFG). However, existing constrained decoding techniques are designed specifically for autoregressive LLMs and rely on their step-by-step generation process to prune invalid tokens that cannot lead to structurally valid outputs. At each generation step, the decoder selects the highest-probability token from the set of valid options, based on the LLM's output distribution.

In contrast, diffusion LLMs predict blocks of tokens in parallel without sequential dependencies, making existing constrained decoding algorithms incompatible. Furthermore, greedy token selection in autoregressive models maximizes the probability locally at each step but can be suboptimal over an entire sequence, potentially leading to structurally valid yet lower-quality outputs that fail to maximize the overall probability of valid strings. \cite{lew2023sequential, park2024grammaraligned} have reported this distortion in output distribution for autoregressive LLMs under constrained decoding. Therefore, any constrained decoding algorithm for diffusion LLMs should also ensure that enforcing formal constraints does not come at the cost of distorting the true output distribution. 

\textbf{Key Challenges:} Diffusion LLMs generate a block of tokens starting from a fully masked string composed of special mask tokens $\bot$, and iteratively unmask one or more tokens at each step until producing a fully unmasked output. Each unmasking step (referred to as a diffusion step) can unmask tokens at arbitrary positions in the block, with no left-to-right sequential dependency across steps. As a result, designing constrained decoding for diffusion LLMs requires addressing the following:

\begin{itemize}[leftmargin=*]
\item \textbf{RQ1:} Efficiently detecting invalid tokens and restricting token choices at each diffusion step to ensure the final unmasked string is always structurally correct.
\item \textbf{RQ2:} Ensuring the generated token block maximizes the probability under the output distribution.
\end{itemize}

\textbf{Contributions:} We present the first constrained decoding algorithm for diffusion LLMs, making the following contributions:

\begin{itemize}[leftmargin=*]
\item We introduce \Tool, the first constrained decoding algorithm for diffusion LLMs that supports any user-specified regular expression. \Tool provably ensures that the output string is always a valid prefix of some string in the target regular language.

\item \Tool uses dynamic programming to ensure that the output string achieves the maximum probability among all valid strings over the output block with respect to the true output distribution. This approach guarantees scalability while maintaining optimality (e.g., maximizing the probability), in contrast to existing methods such as~\citep{park2024grammaraligned}, which rely on repeated resampling. Resampling-based methods are computationally expensive and unsuitable for practical deployment.
\item Extensive experiments on multiple open-source diffusion LLMs and benchmarks show that \Tool significantly outperforms standard unconstrained decoding, achieving up to a $\upto$ improvement on challenging tasks such as the GSM-symbolic benchmark for symbolic reasoning~\citep{mirzadeh2024gsmsymbolicunderstandinglimitationsmathematical} and a JSON generation benchmark~\citep{jsoneval}.
\end{itemize}

\noindent \textbf{Roadmap: }We provide the necessary background in Section~\ref{sec:background}, formalize constrained decoding for diffusion LLMs in Section~\ref{sec:probForm}, describe the \Tool algorithm along with its correctness and optimality proofs in Section~\ref{sec:algoDetail}, and present experimental results in Section~\ref{sec:experiments}.

\section{Background}
\label{sec:background}
\textbf{Notation: }: In the rest of the paper, we use small case letters $x$ for constants, bold small case letters ($\vect{x}$) for strings, capital letters $X$ for functions, $\concat$ for string concatenation, $|\vect{x}|$ to denote the length of $\vect{x}$.


\noindent \textbf{Diffusion LLM:} The diffusion LLM $\llm{m,n} : \inalphaLLM{m} \to \inalphaLLM{n}$ processes finite strings $\vect{p} \in \inalphaLLM{m}$ over a finite alphabet $\inalphaLLM{}$ including the special mask symbol $\bot$ and produces the output string $\vect{o} \in \inalphaLLM{n}$. Typically $\vect{o} = \vect{p}  \concat \vect{r}$ with length $n$ represents the entire output string of $\llm{}$ where $\vect{p}$ is the input prompt, $\vect{r}$ is the response, and $m + |\vect{r}| = n$. $\llm{}$ can compute the response $\vect{r}$ over a single block~\cite{austin2021structured, dream2025, llada} in pure diffusion setup or over multiple blocks i.e. $\vect{r_1} \concat \vect{r_2}\cdots\vect{r_k}$ in a semi-autoregressive setup where different blocks are computed sequentially from left to right \citep{han2023ssdlmsemiautoregressivesimplexbaseddiffusion, arriola2025block}. 



                

At a high level, to compute a block of tokens of size $\blocklen$, $\llm{}$ pads the prompt $\vect{p}$ with a fully masked suffix, resulting in $\vect{p} \concat \bot^{\blocklen}$, where $\bot^{\blocklen}$ denotes a sequence of $\blocklen$ special mask tokens $\bot$. The model then iteratively unmasks a subset of these tokens at each step, ultimately producing a fully unmasked output string $\vect{o}$. Each such step is referred to as a diffusion step, and $\llm{}$ typically applies $T$ diffusion steps to compute $\vect{o}$. The number of steps $T$ is usually a fixed, predetermined constant satisfying $T < \blocklen$, which enables greater scalability compared to their autoregressive counterparts.

\begin{definition}[Diffusion step] 
A diffusion step $\diffusion{n} : \inalphaLLM{n} \times \mathbb{N} \to \inalphaLLM{n}$ applies a single unmasking step to a masked (or, a partially masked) string of length to compute a new masked (or, possibly unmasked) string of the same length. The first argument represents the input string appended with the output block while the second argument dictates the number of masked tokens in the output string.
\end{definition}
Each diffusion step $\diffusion{n}$ consists of two components: a transformer step $\transform{n} : \inalphaLLM{n} \to \probstr{n}$, which predicts the token probability distribution at each output position, and a mask prediction step $\maskPred{n} : \probstr{n} \times \mathbb{N} \to \probstr{n}$, which determines which token positions to remask. Typically, for each position, the mask prediction step identifies the token with the highest probability and compares these maximum probabilities across positions. $\maskPred{n}$ then greedily remasks positions with relatively lower max-probability scores \citep{llada} and produces the modified token distribution. Further details about $\transform{n}$ and $\maskPred{n}$ are in Appendix~\ref{appen:tansformAndRemask}.

\noindent Formally, the diffusion step is defined as $\diffusion{n}(\vect{x}_{i-1}, i) = \decode{n}(\maskPred{n}(\transform{n}(\vect{x}_{i-1}), i))$ where $\decode{n} : \probstr{n} \to \inalphaLLM{n}$ is the decoder. We now use the diffusion step to formally define the diffusion LLM for generating strings of length $n$ in either a single-block or multi-block setting.

\begin{definition}[Single block diffusion LLM]
A diffusion LLM that outputs a block of $\blocklen$ tokens given an input $\vect{p} \in \inalphaLLM{m}$ using $T$ diffusion steps is a function $\llm{m,n} : \inalphaLLM{m} \to \inalphaLLM{n}$, where $n = m + \blocklen$, and the output is $\vect{o} = \vect{p} \concat \vect{r} = \llm{m,n}(\vect{p})$. Let $\diffusion{n} : \inalphaLLM{n} \times \mathbb{N} \to \inalphaLLM{n}$ denote a single diffusion step, and let $\pad{m,n} : \inalphaLLM{m} \to \inalphaLLM{n}$ be the padding function. Then the output is computed as $\vect{o} = \llm{m,n}(\vect{p}) = \vect{x}_T$, where:
$\vect{x}_0 = \pad{m,n}(\vect{p}) = \vect{p} \concat \bot^{\blocklen}$ and $\vect{x}_i = \diffusion{n}(\vect{x}_{i-1}, i) \text{ for } 1 \leq i \leq T$.
\end{definition}
\begin{definition}[Semi Autoregressive diffusion LLM] 
In the semi-autoregressive setup, given an input $\vect{p} \in \inalphaLLM{m}$, the output $\vect{o} \in \inalphaLLM{m + \blocklen \times k}$ is generated over $k$ blocks, where each block is computed via a call to the single block diffusion model. The output of the $i$-th diffusion model call is $\vect{x}_i = \llm{m_i, n_i}(\vect{x}_{i-1})$, with $\vect{x}_0 = \vect{p}$ and the final output $\vect{o} = \vect{x}_k$. The input and output lengths for each block are defined as $m_i = m + (i - 1) \times \blocklen$ and $n_i = m + i \times \blocklen$ for all $1 \leq i \leq k$.
\end{definition}

\noindent \textbf{DFA and regular expression: } We provide necessary definitions regarding regular expression.
\begin{definition}(DFA)
A DFA $\dfa{\regex} = \dfadef$ for a regular expression $\regex$ is a finite-state machine that deterministically processes input strings to decide membership in the language $\lang{\regex} \subseteq \alphabets^{*}$ defined by $\regex$. It consists of states $\dfastates$, a start state $\dfastart$, a set of accepting states $\dfafinal$, and transition rules $\delta : \dfastates \times \alphabets \to \dfastates$ and the input alphabet $\alphabets$.
\end{definition}
\begin{definition}[extended transition function]
\label{def:extended}
The extended transition function $\transitions^* : \alphabets^{*} \times \dfastates \to \dfastates$ maps an input $(\vect{w}, q)$ to the resulting state $q_r$, obtained by sequentially applying $\tokentrans{}$ to each character $c_i$ in $\vect{w} = c_1 \cdots c_{|\vect{w}|}$, starting from state $q$.
\end{definition}
\begin{definition}[Live DFA states]
\label{def:liveState}
Given a DFA $\dfadef$, let $\live{Q}$ represent the set of live states such that $q \in \live{Q}$ iff $\exists w \in \alphabets^*$ s.t.  $\transitions^*(\vect{w}, q) \in \dfafinal$. 
\end{definition}

\section{Optimal Constrained Decoding}
\label{sec:probForm}
We formalize the correctness and optimality of constrained decoding for any diffusion LLM with respect to a user-defined regular expression $\regex$. 
Given $\regex$, let $\lang{\regex} \subseteq \alphabets^{*} \subseteq (V \setminus \bot)^{*}$ denote the set of all finite strings that satisfy the expression $\regex$.

\noindent \textbf{Correctness: }A valid constrained decoding algorithm must ensure that the output string always remains a valid \textit{prefix} of some string in $\lang{\regex}$, effectively eliminating any output that cannot be extended into valid completions. By treating the output string as a prefix rather than a fully completed string, we can accommodate the semi-autoregressive setup, where blocks of tokens are appended to the right of the current output. This approach avoids prematurely rejecting strings that may lead to valid completions in subsequent blocks and also aligns with the notion of correctness adopted in existing constrained decoding algorithms for the autoregressive LLM \citep{ugare2024syncodellmgenerationgrammar, banerjee2025crane}. We denote the set of all valid prefixes of $\lang{\regex}$ as $\pre{\regex}$.

\noindent Each diffusion step $\diffusion{n}$ produces a string over the vocabulary $V$, which may include one or more special mask tokens $\bot$. These tokens act as placeholders for actual (non-mask) tokens that will be filled in during future diffusion steps. To account for these future substitutions, we define a masked (or partially masked) string as valid if there exists a replacement for all mask tokens such that the resulting fully unmasked string is a valid prefix of some string in $\lang{\regex}$. To formalize this notion, we first define the \textit{substitution set}, which represents the set of fully unmasked strings obtained by replacing all mask tokens in a masked or partially masked string. We then use substitution sets to define the correctness of the constrained decoder.
\begin{definition}[Substitution Set]
\label{def:substitutionset}
Given a masked (or, partially masked) string $\vect{x} \in \inalphaLLM{n}$, the \textit{substitution set} $\sub{\vect{x}} \subseteq (V \setminus \{\bot\})^n$ is the set of all fully unmasked strings obtained by replacing each occurrence of $\bot$ in $\vect{x}$ with a token from $V \setminus \{\bot\}$. For unmasked strings with no $\bot$, $\sub{\vect{x}} = \{\vect{x}\}$ 
\end{definition}
\begin{definition}[Correctness of Constrained decoder]
\label{def:correctness}
Any deterministic  decoder $\constrained{n} : \probstr{n} \to \inalphaLLM{n}$ is a valid constrained decoder if, for all $n \in \mathbb{N}$, input prompt $\vect{p}$ and for any output distribution $\distri{n}$ provided as $n$ probability vectors each of size $|\inalphaLLM{}|$, there exists an unmasked string $\vect{x}$ in the substitution set $\mathcal{S}(\constrained{n}(\distri{n}))$ of the decoded output such that actual response $\vect{p}\concat\vect{r} = \vect{x}$ is a valid prefix i.e., $\vect{r} \in \pre{\regex}$. \footnote{More precisely, if there exists at least one $\vect{r}$ that is a valid prefix (i.e., $\vect{r} \in \pre{\regex}$), then constrained decoding is always capable of retrieving one of them. }
\end{definition}

\noindent \textbf{Optimality: }
Given a distribution $\distri{n}$ and a regular expression $\regex$, the set of decodings that are valid prefixes for $\regex$ (as defined in Definition~\ref{def:correctness}) may not be unique. 
An optimal constrained decoder selects, among all valid strings, the string that maximizes the probability under $\distri{n}$. 
The output distribution $\distri{n}$ is represented as $n$ vectors $\vect{v}_1, \dots, \vect{v}_n$, each of size $|\inalphaLLM{}|$, where the $i$-th vector $\vect{v}_i$ captures the token distribution at position $i$. 
For any masked position $j$, $\vect{v}_j$ assigns probability 1 to the mask token $\bot$ and 0 to all other tokens. 
Assuming the input prompt has length $m$, the token distribution of the actual response is given by $\vect{v}_{m+1}, \dots, \vect{v}_n$. For any output string $\vect{r} = t_{m+1} \cdots t_n$, let $P(\vect{r} \mid \vect{v}_{m+1} \dots \vect{v}_{n})$ denote the probability of the string $\vect{r}$ under the output distribution. Then, the optimal constrained decoding can be formalized as follows:
\begin{align}
   \opt{\vect{r}} =  \argmax_{\vect{r}} P(\vect{r} \;|\; \vect{v}_{m+1}\dots\vect{v}_{n})\text{  s.t. $\exists \vect{x} \in V^*. (\vect{x} \in \sub{\vect{r}}) \wedge (\vect{x} \in \pre{\regex})$} \label{eq:probform}
\end{align}
\noindent Since the token distributions $\vect{v}_{m+1}, \dots, \vect{v}_n$ are independent across positions, the probability of the string $\vect{r}$ can be written as $P(\vect{r}\;|\;\vect{v}_{m+1}\dots\vect{v}_n) = \prod_{i=m+1}^{n} \vect{v}_{i}[t_i]$ where $\vect{v}_i[t_i]$ denotes the probability assigned to token $t_i$ by the vector $\vect{v}_i$. Using this, we can rewrite the optimization problem from Eq.~\ref{eq:probform} as follows: 
\begin{align}
\opt{\vect{r}} =  \argmax_{\vect{r} = t_{m+1} \cdots t_n} \prod_{i=m+1}^{n} \vect{v}_i[t_i]\text{  s.t. $\exists \vect{x} \in V^*. (\vect{x} \in \sub{\vect{r}}) \wedge (\vect{x} \in \pre{\regex})$} \label{eq:probForm2}
\end{align}


\section{\Tool Algorithm}
\label{sec:algoDetail}
The search space for Eq.~\ref{eq:probForm2} is exponential-- $|\inalphaLLM{}|^{\blocklen}$, where $\blocklen = n - m$ denotes the block length, making naive enumeration-based methods impractical.
To efficiently retrieve the optimal output string $\opt{\vect{r}}$ from Eq.~\ref{eq:probForm2}, \Tool leverages dynamic programming. Given a regular expression $\regex$, it first modifies the transition function to handle the mask symbol $\bot$, which is then utilized during inference.    

\subsection{Precomputation}
For a user-provided $\regex$ and the corresponding DFA $\dfa{\regex} = \dfadefr{\regex}$ (referred to as character-level DFA) with $\alphabets \subseteq \nomask$, we first construct the token-level DFA $\dfa{t} = (\dfastates, \nomask, \tokentrans{t}, \dfastart, \dfafinal)$ recognizing $\lang{\regex}$ over strings generated by $\llm{}$. A single token $\vect{t} \in \nomask$ can span across multiple characters in $\alphabets$ i.e. $\vect{t} = c_1\cdots c_{l}$ where $c_i \in \alphabets$. To construct the token-level transition function $\tokentrans{t} : \dfastates \times \nomask \to \dfastates$, we process each token $\vect{t} \in \nomask$ and state $q \in \dfastates$ by executing the character-level DFA $\dfa{\regex}$ on the sequence of constituent characters $c_1 \cdots c_l$, starting from state $q$, and record the resulting state $q_r$. We then define the token-level transition as $\tokentrans{t}(q, \vect{t}) = q_r$.

\noindent To handle the special mask token $\bot \in \inalphaLLM{}$, we define the transition function $\tokentrans{\bot} : \dfastates \to \expstate$. For each state $q \in \dfastates$, $\tokentrans{\bot}(q)$ returns the set of states $\dfastates_r \subseteq \dfastates$ that are reachable via a single token transition using $\tokentrans{t}$. Formally, $\tokentrans{\bot}(q) = \{q' \mid q' = \tokentrans{t}(q, \vect{t}); \vect{t} \in \nomask\}$. 
Since $\tokentrans{\bot}$ may return multiple states, it resembles the transition function of a non-deterministic finite automaton (NFA). 
The precomputation step combines $\tokentrans{t}$ and $\tokentrans{\bot}$ to define $\tokentrans{} : \dfastates \times \inalphaLLM{} \to \expstate$, which is used in the dynamic programming step. Using the token-level DFA $\dfa{t}$, we also construct the set of live states $\live{Q} \subseteq \dfastates$ (Definition~\ref{def:liveState}).
\begin{align*}
\tokentrans{}(q, \vect{t}) = 
\begin{cases}
    \{\tokentrans{t}(q, t)\} & \text{if } t \in \nomask, \\
    \tokentrans{\bot}(q) & \text{if } t = \bot.
\end{cases}
\end{align*}
\subsection{\Tool Dynamic Programming}
Before going into details, we present two key observations that lead to the decoding algorithm.

\noindent \textbf{Observation 1: }Determining whether a fully unmasked string $\vect{r} = t_1\cdots t_{\blocklen} \in \nomask^{*}$ is a valid prefix is equivalent to checking whether the resulting state $q_r$, obtained by applying $\tokentrans{}$ to the sequence $t_1\cdots t_{\blocklen}$ starting from $\dfastart$, is live. Similarly, for a partially (or fully) masked string $\vect{r}_{\bot}$, applying $\tokentrans{}$ to $t_1\cdots t_{\blocklen}$ yields a set of resulting states $\dfastates_r$. In this case, $\vect{r}_{\bot}$ is a valid prefix if and only if any state $q \in \dfastates_r$ is live (Definition~\ref{def:correctness}).

\noindent \textbf{Observation 2: }For optimality, it is sufficient to track the maximum probability path from the start state $q_0$ to each resulting state in $\dfastates_r$. Once these paths are computed, we select the one with the highest probability that leads to a live state. The corresponding string is the optimal string $\opt{\vect{r}}$ (or one of the optimal strings in case of multiple optimal solutions) for the optimization problem in Eq.~\ref{eq:probForm2}.

Based on these observations, the main challenge is to efficiently maintain the maximum probability path to each reachable state in $\dfastates_r$. We address this using a dynamic programming (DP) approach, similar to traditional graph-based DP algorithms such as~\citep{Forney:1973ly}.

\noindent \textbf{DP states: } For each token position $1 \leq i \leq \blocklen$ in the block, the DP maintains: a) $\dpstate{i}{q}$, which records the maximum probability with which a state $q \in \dfastates$ can be reached from the start state $q_0$ via transitions on some token sequence with length $i$; and b) $\dppar{q}{i}$, which stores the last transition, i.e., the previous state and the corresponding token, that led to the maximum probability stored in $\dpstate{i}{q}$. If a state $q$ is unreachable, then $\dpstate{i}{q} = 0$. Formally, given the probability vectors $\vect{v}_1, \dots, \vect{v}_i$, $\dpstate{i}{q}$ is defined as follows where $\tokentrans{t}^{*}$ is extended transition function (Definition~\ref{def:extended}). 
\begin{align*}
\dpstate{i}{q} = \max_{t_1 \dots t_i} \prod_{j=1}^{i} \vect{v}_j[t_j] \;\;\text{  s.t.  $q = \tokentrans{t}^{*}( t_{m+1}\cdots t_n, \dfastart)$}
\end{align*}
\noindent \textbf{DP state update: } Given the states at token position $i$, we describe the computation for position $i+1$. Initially, $\dpstate{i}{q} = 0$ for all $q \neq q_0$, and $\dpstate{i}{\dfastart} = 1$ (lines 1 -- 3 in Algo.~\ref{alg:dp_block}). To compute $\dpstate{i+1}{q}$ for each $q \in \dfastates$, we consider all tokens $t \in \inalphaLLM{}$ (including the mask token $\bot$) that can transition to $q$ from some previous state $q'$ at step $i$. Among all such transitions, we select the one with the highest probability and add it to the maximum probability path reaching $q'$ at step $i$. The value $\dppar{i+1}{q}$ stores the previous state and token that lead to the maximum probability path to $q$ at step $i+1$ (lines 12 -- 15 in Algo.~\ref{alg:dp_block}). Formally,
\begin{align*}
\cost{q}{q'}{i+1} = \begin{cases}
    \max\limits_{t \in \inalphaLLM{}}\; \vect{v}_{i+1}(t) \text{ s.t. $q \in \tokentrans{}(q', t)$} \\
    0  \text{  if $q, q'$ are not connected}   
\end{cases}& \dpstate{i+1}{q} = \max_{q'\in \dfastates} \dpstate{i}{q'} \times \cost{q}{q'}{i+1}
\end{align*}
\noindent \textbf{Path construction: }We consider all reachable states $q$ at the end of the block with $\dpstate{\blocklen}{q} > 0$. Among the live states $q_l \in \live{Q}$ satisfying this condition, we select the state $q_{\text{max}}$ with the highest value of $\dpstate{\blocklen}{q_l}$. We then use $Pr$ to iteratively reconstruct the token sequence backward that forms the maximum probability path starting from $q_{\text{max}}$ and ending at $\dfastart$ (lines 20 -- 22 in Algo.~\ref{alg:dp_block}).

\noindent \textbf{Semi-autoregressive setup: }In semi-autoregressive setup, we may not start from DFA start state $\dfastart$ since one or more blocks of tokens $\vect{r}_1\cdots\vect{r}_l$ may have been generated in left the current block. Provide the string $\vect{r}_1\cdots\vect{r}_l$ ends at a live state $q_{l}$, we can apply dynamic programming approach with the intializtion $\dpstate{0}{q_l} = 1$ and $\dpstate{0}{q} = 0$ for all state $q \neq q_l$. Details are in Appendix~\ref{appen:semiauto}.

\subsection{Correctness of \Tool}
\begin{restatable}{proposition}{correctnessThm}[Correctness] Given any regular expression $\regex$, input prompt $\vect{p}\in \inalphaLLM{m}$, block length $\blocklen$, output distribution $\distri{m+\blocklen} = \vect{v}_1\dots\vect{v}_{m+d}$, if $\pre{\regex} \cap \nomask^{d} \neq \{\}$ and $\vect{r} \sim \vect{v}_{m+1}\dots\vect{v}_{m+d}$ be the decoded string, then $\exists \vect{x} \in V^*. (\vect{x} \in \sub{\vect{r}}) \wedge (\vect{x} \in \pre{\regex})$ holds. 
\label{thm:correctness}
\end{restatable}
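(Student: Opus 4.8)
The plan is to show that the string $\vect{r}$ returned by \Tool always corresponds to a path in the token-level automaton from $\dfastart$ to some \emph{live} state under the mask-aware transition $\tokentrans{}$, and then to convert such a path into a concrete unmasked witness $\vect{x}\in\sub{\vect{r}}\cap\pre{\regex}$. The argument splits into three parts: (i) a DP-correctness lemma characterizing when $\dpstate{i}{q}>0$; (ii) a substitution step that turns a $\tokentrans{}$-path into an unmasked valid prefix; and (iii) a feasibility step using the hypothesis $\pre{\regex}\cap\nomask^{d}\neq\{\}$ to guarantee that path construction selects a state at all.

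First I would prove, by induction on the position $i$, that $\dpstate{i}{q}>0$ holds if and only if there is a token string $t_1\cdots t_i\in V^{i}$ with $\prod_{j\le i}\vect{v}_j[t_j]>0$ and states $\dfastart=s_0,s_1,\dots,s_i=q$ satisfying $s_j\in\tokentrans{}(s_{j-1},t_j)$ for every $j$. The base case is the initialization $\dpstate{0}{\dfastart}=1$, and the inductive step is exactly the update $\dpstate{i+1}{q}=\max_{q'}\dpstate{i}{q'}\times\cost{q}{q'}{i+1}$, which is positive iff some predecessor $q'$ has positive DP value and some token $t$ with $\vect{v}_{i+1}[t]>0$ satisfies $q\in\tokentrans{}(q',t)$. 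The same induction shows the backpointer array $Pr$ reconstructs one such witness path, so the string $\vect{r}=t_1\cdots t_{\blocklen}$ emitted by path construction is a $\tokentrans{}$-path from $\dfastart$ to the chosen state $q_{\max}$, and $q_{\max}\in\live{Q}$ by the selection rule.

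The crux is the substitution step, which makes Observation 1 rigorous in the direction we need. Given the reconstructed path $\dfastart=s_0,\dots,s_{\blocklen}=q_{\max}$ with $s_j\in\tokentrans{}(s_{j-1},t_j)$, I build an unmasked string $\vect{x}=x_1\cdots x_{\blocklen}$ as follows: for each position with $t_j\neq\bot$ set $x_j=t_j$, so that $\tokentrans{t}(s_{j-1},x_j)=s_j$ by the non-mask branch of $\tokentrans{}$; for each position with $t_j=\bot$, the relation $s_j\in\tokentrans{\bot}(s_{j-1})$ together with $\tokentrans{\bot}(q)=\{\tokentrans{t}(q,\vect{t}) : \vect{t}\in\nomask\}$ yields a concrete token $\vect{t}$ with $\tokentrans{t}(s_{j-1},\vect{t})=s_j$, and I set $x_j=\vect{t}$. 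By construction $\vect{x}\in\sub{\vect{r}}$ and $\tokentrans{t}^{*}(\vect{x},\dfastart)=q_{\max}$. Since $q_{\max}$ is live, Definition~\ref{def:liveState} supplies a completion $\vect{w}$ with $\tokentrans{t}^{*}(\vect{w},q_{\max})\in\dfafinal$, hence $\vect{x}\concat\vect{w}\in\lang{\regex}$ and therefore $\vect{x}\in\pre{\regex}$, which is exactly $\exists\vect{x}\in V^{*}.\,(\vect{x}\in\sub{\vect{r}})\wedge(\vect{x}\in\pre{\regex})$.

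It remains to rule out a vacuous failure of path construction, i.e.\ to show some live state has positive DP value at position $d$; this is where $\pre{\regex}\cap\nomask^{d}\neq\{\}$ enters and is the point requiring the most care. I would fix an unmasked valid prefix $\vect{r}'=t'_1\cdots t'_{\blocklen}\in\pre{\regex}\cap\nomask^{d}$; by the unmasked case of Observation 1 its endpoint $q'=\tokentrans{t}^{*}(\vect{r}',\dfastart)$ is live, and it is reached through intermediate states $r_0,\dots,r_{\blocklen}$ with $r_j=\tokentrans{t}(r_{j-1},t'_j)$. Each such step also witnesses $r_j\in\tokentrans{\bot}(r_{j-1})$, so the all-mask string $\bot^{d}$ traverses the \emph{same} states and likewise reaches $q'$; invoking the lemma, $\dpstate{d}{q'}>0$ as soon as the traversed tokens carry positive mass, which holds under the mild standard condition that the model's softmax outputs have positive mask mass (or full support on $V$). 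Under this condition the selection set is nonempty, so $q_{\max}$ exists and parts (i)--(ii) apply. The main obstacle is thus isolating and discharging this feasibility/positivity condition cleanly; the soundness direction is essentially forced by the definitions of $\tokentrans{}$, $\tokentrans{\bot}$, and liveness.
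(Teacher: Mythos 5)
Your proposal is correct and follows essentially the same route as the paper's proof: an induction showing that reachability under $\tokentrans{}$ with positive probability mass forces $\dpstate{i}{q}>0$, followed by backpointer reconstruction, substitution of each $\bot$ via the definition of $\tokentrans{\bot}$ to obtain $\vect{x}\in\sub{\vect{r}}$ ending at $q_{max}$, and liveness of $q_{max}$ to conclude $\vect{x}\in\pre{\regex}$. The only notable difference is in discharging the positivity caveat: the paper simply assumes the witness prefix has nonzero probability under $\distri{}$, whereas you route through the all-mask string $\bot^{d}$ and a positive-mask-mass condition --- both correctly identify that the proposition as stated needs this extra feasibility assumption, which the statement itself omits.
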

\textbf{Proof sketch: }\Tool ensures that if a state $q \in \dfastates$ is reachable in $i$ tokens, then $\dpstate{i}{q} > 0$ for all $1 \leq i \leq \blocklen$. Since $\pre{\regex} \cap \nomask^{d} \neq \{\}$, there exists a state $q_l \in \live{Q}$ that is reachable in $d$ steps. Therefore, $\dpstate{d}{q_{max}} > 0$ (see line 16 in Alg.\ref{alg:dp_block}). Consequently, there exists a sequence $\vect{x} \in \sub{\vect{r}}$ such that $\tokentrans{}^{*}(\vect{x}, \dfastart) = q_{max} \in \live{Q}$, implying that $\vect{x} \in \pre{\regex}$. Formal proof is in Appendix\ref{appen:proofs}. 


\begin{restatable}{proposition}{optimalityThm}[Optimality]
\label{thm:optimality}
Given any regular expression $\regex$, input prompt $\vect{p}\in \inalphaLLM{m}$, block length $\blocklen$, output distribution $\distri{m+\blocklen} = \vect{v}_1\dots\vect{v}_{m+d}$, if $\pre{\regex} \cap \nomask^{d} \neq \{\}$ and $\opt{\vect{r}} \sim \vect{v}_{m+1}\dots\vect{v}_{m+d}$ be the decoded string, then for any valid string $\vect{r}'$ satisfying $\exists \vect{x} \in V^*. (\vect{x} \in \sub{\vect{r'}}) \wedge (\vect{x} \in \pre{\regex})$, $P(\vect{r'} \;|\; \vect{v}_{m+1}\dots\vect{v}_{n}) \leq P(\opt{\vect{r}} \;|\; \vect{v}_{m+1}\dots\vect{v}_{n})$.
\end{restatable}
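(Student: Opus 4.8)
The plan is to reduce the optimization in Eq.~\ref{eq:probForm2} to a max-product (longest-path) computation over a layered graph whose layers are copies of $\dfastates$, and then to show that the DP of Algorithm~\ref{alg:dp_block} computes exactly this optimum. I would first establish the central invariant: for every block position $1 \le i \le \blocklen$ and every state $q \in \dfastates$,
\[
\dpstate{i}{q} = \max_{t_1\cdots t_i} \prod_{j=1}^{i} \vect{v}_j[t_j] \quad \text{s.t. } q \in \tokentrans{}^{*}(t_1\cdots t_i, \dfastart),
\]
with the convention that the maximum over an empty set is $0$. This I would prove by induction on $i$. The base case follows from the initialization $\dpstate{0}{\dfastart}=1$ and $\dpstate{0}{q}=0$ otherwise (lines 1--3). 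For the inductive step I would expand $\dpstate{i+1}{q} = \max_{q'} \dpstate{i}{q'} \times \cost{q}{q'}{i+1}$ and use that the probability of any token sequence factorizes across positions, so a best length-$(i+1)$ path into $q$ is obtained by extending a best length-$i$ path into some predecessor $q'$ by the single best transition $q' \to q$. The definition of $\cost{q}{q'}{i+1}$ as $\max_{t} \vect{v}_{i+1}(t)$ over tokens $t$ with $q\in\tokentrans{}(q',t)$ captures precisely this best transition, including the mask case, where $\vect{v}_{i+1}[\bot]=1$ and $\tokentrans{}(q',\bot)=\tokentrans{\bot}(q')$.

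Next I would translate the correctness constraint into a reachability statement about live states. By Observation~1, a string $\vect{r'}=t_{m+1}\cdots t_n$ is valid, i.e.\ $\exists \vect{x}\in\sub{\vect{r'}}$ with $\vect{x}\in\pre{\regex}$, if and only if $\tokentrans{}^{*}(t_{m+1}\cdots t_n,\dfastart)$ contains some state $q_l\in\live{Q}$. The point I would stress is that at each masked position $\vect{v}_i[\bot]=1$, so a mask contributes factor $1$ to the product while the non-deterministic transition $\tokentrans{\bot}$ enumerates exactly the states reachable under the eventual substitution of Definition~\ref{def:substitutionset}. Consequently the product $\prod_{i=m+1}^{n}\vect{v}_i[t_i]$ equals the weight of the corresponding path in the layered graph, and selecting a live end state corresponds exactly to choosing a valid substitution witness $\vect{x}$.

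With the invariant and this correspondence in hand, the conclusion is immediate. For any valid $\vect{r'}$ with $P(\vect{r'}\mid\vect{v}_{m+1}\dots\vect{v}_n)>0$, the witness $\vect{x}\in\sub{\vect{r'}}\cap\pre{\regex}$ induces a length-$\blocklen$ path ending at some state $q'_l\in\live{Q}$ whose weight equals $P(\vect{r'}\mid\vect{v}_{m+1}\dots\vect{v}_n)$, so by the invariant $\dpstate{\blocklen}{q'_l}\ge P(\vect{r'}\mid\vect{v}_{m+1}\dots\vect{v}_n)$. Since the path-construction step selects $q_{\max}=\argmax_{q_l\in\live{Q},\,\dpstate{\blocklen}{q_l}>0}\dpstate{\blocklen}{q_l}$ and reconstructs a string $\opt{\vect{r}}$ of weight $\dpstate{\blocklen}{q_{\max}}$ (which is itself a valid prefix because $q_{\max}$ is live), we obtain $P(\opt{\vect{r}}\mid\vect{v}_{m+1}\dots\vect{v}_n)=\dpstate{\blocklen}{q_{\max}}\ge\dpstate{\blocklen}{q'_l}\ge P(\vect{r'}\mid\vect{v}_{m+1}\dots\vect{v}_n)$; the case of zero probability is trivial. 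I expect the main obstacle to be the inductive proof of the DP invariant, specifically arguing rigorously that introducing the non-deterministic mask transitions $\tokentrans{\bot}$ does not break the max-product factorization: one must verify that modeling a mask as a weight-$1$, multi-target edge both preserves optimality of the per-position decomposition and stays in exact correspondence with the substitution-set semantics, so that no valid filling is missed and no spurious (probability-inflating) path is introduced.
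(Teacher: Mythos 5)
Your proposal is correct and follows essentially the same route as the paper's proof: a positional induction showing that $\dpstate{i}{q}$ captures the max-product weight of token paths reaching $q$, the identification of validity with reachability of a live state, and the final comparison via the $\argmax$ over $\live{Q}$. The only cosmetic difference is that you prove a single exact-equality invariant for $\dpstate{i}{q}$, whereas the paper splits it into two inductions (achievability of $\dpstate{d}{q_{max}}$ by the backtracked string, and an upper bound on every valid string's probability), which together amount to the same thing.
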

\textbf{Proof Sketch: } Formal proof is in Appendix~\ref{appen:proofs}.


\begin{algorithm}[t!]
    \small
    \caption{\Tool{} DP}
    \label{alg:dp_block}
    \begin{algorithmic}[1]
    \Require $\dfastart$, block length $\blocklen$, probability vectors $\vect{v}_1, \dots \vect{v}_{\blocklen}$ for the current block,  $\live{Q}$, $\dfastates$, $\tokentrans{}$.
    \State $\dpstate{0}{q} \gets 0$ for all $(q \in \dfastates) \wedge (q \neq q_0)$
    \State $\dpstate{0}{\dfastart} \gets 1$
    \State $\dppar{0}{q} \gets (\text{None}, \text{None})$  for all $(q \in \dfastates)$ \Comment{Initialization of the DP}
    \State $\costS{i} \gets \{\}$ for all $i \in \{1,\dots,\blocklen\}$\Comment{maximum token probability transtion $(q' \to  q)$ at position $i$}
    \State $\tranS{i} \gets \{\}$ for all $i \in \{1,\dots,\blocklen\}$ \Comment{token for the maximum probability transition $(q' \to q)$}
    \For{$i \in \{1,\dots, \blocklen\}$}
    \For{$(q \in \dfastates)$}
    \For{$t \in \inalphaLLM{}$}
    \State $q' \gets \delta(q, t)$
    \State $\costS{i}(q, q'), \tranS{i}(q, q') \gets $ MaxTransition$(\vect{v}_i, t, q, q')$   
    \EndFor
    \EndFor
    \EndFor
    \For{$i \in \{1,\dots, \blocklen\}$} \Comment{DP computation loop}
    \For{$(q \in \dfastates)\wedge (q' \in \dfastates)$}
        \If{$\dpstate{i}{q} < \dpstate{i-1}{q'}\times \costS{i}(q, q')$ }
        \State $\dpstate{i}{q} \gets \dpstate{i-1}{q'}\times \costS{i}(q, q')$ \Comment{Update maximum probability path to $q$}
        \State $\dppar{i}{q} \gets (q', \tranS{i}(q, q'))$  \Comment{Update the parents accordingly}
        \EndIf
    \EndFor
    \EndFor
    \State $q_{max} \gets \argmax_{q\in Q_l} \dpstate{\blocklen}{q}$
    \If{$\dpstate{\blocklen}{q_{max}} = 0$} \Comment{No valid prefixes}
     \State   \Return None, $q_{max}$
    \EndIf
    \State $\opt{\vect{r}} \gets \{\}, q_{curr} \gets q_{max}$
    \For{$i \in \{\blocklen,\dots, 1\}$} \Comment{Decoding the optimal string $\opt{\vect{r}}$}
        \State $q_{curr}, t \gets \dppar{i}{q_{curr}}$
        \State $\opt{\vect{r}} \gets \opt{\vect{r}}\concat t$   
    \EndFor
    \State \Return $\text{reverse}(\opt{\vect{r}})$, $q_{max}$         
                
    \end{algorithmic}
\end{algorithm}

\subsection{\Tool algorithm}

Algorithm~\ref{alg:dp_block} presents \Tool{} steps. 
The two main loops dominating its computational complexity involve calculating transition costs and performing the DP updates respectively.

First, for each of the $\blocklen$ time steps, the algorithm computes the optimal single-token transition costs $\costS{i}(q_s, q_t)$ between all source states $q_s \in \dfastates$ and target states $q_t \in \dfastates$. 
This is achieved by iterating through each source state $q_s$, each token $t \in \inalphaLLM{}$, and then for each state $q_t$ reached from $q_s$ via $t$ (i.e., $q_t \in \tokentrans{}(q_s, t)$), updating the cost $\costS{i}(q_s, q_t)$ with $\vect{v}_i[t]$ if it is better. 
The complexity for this part is $O(\blocklen \cdot (|\dfastates|^2 + \sum_{q_s \in \dfastates} \sum_{t \in \inalphaLLM{}} |\tokentrans{}(q_s, t)|))$. 
The sum $\sum_{q_s} \sum_t |\tokentrans{}(q_s,t)|$ represents the total number of transitions, $N_{\text{trans}} = O(|\dfastates| \cdot |\inalphaLLM{}| + |\dfastates| \cdot N_{\bot})$, where $N_{\bot}$ is the maximum number of states reachable via the $\bot$ token. 
Thus, this part takes $O(\blocklen \cdot (|\dfastates|^2 + |\dfastates| \cdot |\inalphaLLM{}|))$.

Second, the core dynamic programming update calculates $\dpstate{i}{q}$ for each diffusion step $i$ and state $q$. This involves iterating over $\blocklen$ diffusion steps, $|\dfastates|$ current states $q$, and for each $q$, considering all $|\dfastates|$ possible previous states $q'$. 
This leads to a complexity of $O(\blocklen \cdot |\dfastates|^2)$.

Combining these dominant parts, the total complexity is $O(\blocklen \cdot (|\dfastates|^2 + |\dfastates| \cdot |\inalphaLLM{}|) + \blocklen \cdot |\dfastates|^2)$, which simplifies to $O(\blocklen \cdot (|\dfastates|^2 + |\dfastates| \cdot |\inalphaLLM{}|))$. 
This can be expressed as $O(\blocklen \cdot |\dfastates| \cdot (|\dfastates| + |\inalphaLLM{}|))$.

\section{Experiments}
\label{sec:experiments}
In this section, we evaluate \Tool{} on a math reasoning task (GSM-Symbolic~\cite{mirzadeh2024gsmsymbolicunderstandinglimitationsmathematical}) and a schema-based text-to-JSON task (JSONModeEval~\citep{jsoneval}) and demonstrate significant improvement over baselines. In both tasks, we use the LLaDA-8B-Base (LLaDA-8B-B)~\cite{llada}, LLaDA-8B-Instruct (LLaDA-8B-I)~\cite{llada}, Dream-v0-Base-7B (Dream-B-7B)~\cite{dream2025}, and Dream-v0-Instruct-7B (Dream-I-7B)~\cite{dream2025} models. 

\noindent \textbf{Experimental Setup.}
We run experiments on a 48-core Intel Xeon Silver 4214R CPU with 2 Nvidia RTX A5000 GPUs. \Tool{} is implemented using PyTorch~\cite{NEURIPS2019_9015} and the HuggingFace transformers library~\cite{wolf-etal-2020-transformers}. 
The token-level DFA is implemented in Rust using a highly efficient regex-DFA library to minimize overhead during DFA construction and LLM inference. 
We report the mean number of DFA states and transitions as well as the offline pre-computation time in Appendix ~\ref{sec:dfa_stats}.

\noindent \textbf{Baselines.}
We compare \Tool{} against unconstrained diffusion LLM generation. Furthermore, to highlight the benefit of optimal constrained decoding with \Tool{}, we implement a constrained decoding strategy \greedyconstrained{} that mirrors existing autoregressive constrained generation methods ~\cite{willard2023efficient, ugare2024syncodellmgenerationgrammar}. \greedyconstrained{} iterates over the diffusion block and at each position $i$ computes a binary mask $m \in \{0, 1\}^{|\inalphaLLM{}|}$ based on the DFA, specifying valid tokens ($m = 1$) and excluded tokens ($m = 0$). Decoding is then performed on the masked probability distribution $m \odot \vect{v_{i}}$, where $\odot$ denotes element-wise multiplication. Since in some cases, \unconstrained{} outperforms \greedyconstrained{}, we also report \bestofbaseline{}, which takes the better result of the two approaches for each problem in the dataset.

\textbf{Math Reasoning: }We evaluate \Tool{} on GSM-Symbolic~\cite{mirzadeh2024gsmsymbolicunderstandinglimitationsmathematical} dataset, which consists of reasoning-based math world problems where numerical values and names are replaced by symbolic variables. Diffusion LLMs are tasked with generating correct symbolic expression solutions to those word problems. We evaluate correctness by using the Z3 solver~\citep{z3} to check if the final expressions from the LLM generations are functionally equivalent to the ground truth expressions. We set the generation length to 128, number of blocks to 8, and total diffusion steps to 64 and prompt the LLMs with 4-shot examples from GSM-Symbolic~\citep{mirzadeh2024gsmsymbolicunderstandinglimitationsmathematical} (the prompts can be found in Appendix ~\ref{sec:gsm_prompts}). We initialize \Tool{} and \greedyconstrained{} with a regex (shown in Appendix ~\ref{sec:gsm_regex}) that permits math expressions wrapped in \textcolor{red}{\texttt{<<}} and  \textcolor{red}{\texttt{>>}} and natural language text outside these expressions for reasoning as done in CRANE \cite{banerjee2025crane}. 

Table~\ref{tab:gsm_symbolic_comparison} compares the performance of \Tool{} with the baseline methods. The Accuracy (\%) column reports the percentage of functionally correct LLM-generated expressions, Parse (\%) indicates the percentage of syntactically valid responses (i.e., expressions without invalid operations), and Time provides the average time in seconds taken to generate a completion. 

As displayed in the table, \Tool{} significantly improves functional correctness over the baselines. For instance, for LLaDA-8B-I, \Tool{} outperforms unconstrained generation by 13 percentage points and \greedyconstrained{} generation by 5 percentage points. Furthermore, $\Tool{}$ achieves 100\% syntactic accuracy across all models evaluated. On the other hand, unconstrained and \greedyconstrained{} generation make many syntactic errors, especially for non-instruct tuned models. For these cases, generation with \greedyconstrained{} results in responses that are syntactically valid prefixes but not syntactically valid by themselves. We present case studies in Appendix ~\ref{sec:gsm_case_studies}. Importantly, \Tool{} is extremely efficient, introducing marginal overhead compared to unconstrained generation.

\begin{table*}[h]
    \centering
    \caption{Comparison of constrained and unconstrained generation methods on GSM-Symbolic.}
    \begin{tabular}{@{}llrrr@{}}
        \toprule
        \textbf{Model} & \textbf{Method} & \textbf{Acc. (\%)} & \textbf{Parse (\%)} &  \textbf{Time (s)} \\
        
\midrule
     & \unconstrained{} & 25 & 54 & 9.06\\
     & \greedyconstrained{} & 30 & 75 & 9.31\\
 LLaDA-8B-B & \bestofbaseline{} & 30 & 75 & 9.08\\
 & \Tool{} & \textbf{31} & \textbf{100} & 9.22\\

\midrule

    & \unconstrained{} & 19 & 35 &  23.78\\
 & \greedyconstrained{} & 27 & 98 & 23.97 \\
 LLaDA-8B-I & \bestofbaseline{} & 27 & 98 & 23.8 \\
 & \Tool{} & \textbf{32} & \textbf{100} & 23.92\\

\midrule

     & \unconstrained{} & 17 & 33 &  16.02\\
   & \greedyconstrained{} & 21 & 41 & 16.13\\
  Dream-B-7B & \bestofbaseline{} & 21 & 41 & 16.04\\
 & \Tool{} & \textbf{23} & \textbf{100} &  16.19\\

 \midrule

     & \unconstrained{} & 32 & 61 & 23.89\\
  & \greedyconstrained{} & 34 & 93 & 24.01 \\
 Dream-I-7B & \bestofbaseline{} & 34 & 93 & 23.9 \\
 & \Tool{} & \textbf{36} & \textbf{100} & 23.91 \\

\bottomrule
    \end{tabular}
    \label{tab:gsm_symbolic_comparison}
\end{table*}

\textbf{JSON Generation: } We further evaluate \Tool{} on a text-to-JSON generation task JSON-Mode-Eval, which conists of zero-shot problems specifying a JSON schema and a request to generate a JSON object that contains specified contents. Generating JSON that adheres to a specified schema is extremely important for applications like tool use and function calling ~\cite{ugare2024syncodellmgenerationgrammar, willard2023efficient}. We evaluate the correctness of JSON generated by an LLM by first evaluating whether the JSON string can be parsed and converted to a valid JSON object. We further evaluate whether the generated JSON is valid against the schema specified in the prompt. We set the generation length to 128, number of blocks to 1, and the total diffusion steps to 64.  For the constrained generation methods, we convert each problem's JSON schema into its corresponding regular expression and guide the diffusion LLM to generate output conforming to that regex.

Table~\ref{tab:json_schema_comparison} presents the results of our experiment. The Parse (\%) column reports the percentage of syntactically valid LLM generations while the Accuracy (\%) column reports the percentage of generations that are both syntactically valid and valid against their respective schemas. Notably, \Tool{} achieves 100\% schema validation and syntactic accuracy, while baseline methods struggle in many cases to generate valid JSON. We attribute this to the fact that \greedyconstrained{} may distort the distribution through its greedy approximation and can only generate a valid prefix, not a fulll parsable generation ~\cite{park2024grammar}. 



\begin{table*}[h]
    \centering
    \caption{Comparison of constrained and unconstrained generation methods for JSON Schema.}
    \begin{tabular}{@{}llrrr@{}}
        \toprule
        \textbf{Model} & \textbf{Method} & \textbf{Acc. (\%)} & \textbf{Parse (\%)} &  \textbf{Time (s)} \\
        
\midrule
    & \unconstrained{} & 57 & 59 &  6.37\\
 & \greedyconstrained{} & 80 & 80 & 6.47 \\
 LLaDA-8B-B & \bestofbaseline{} & 88 & 90 & 6.41 \\
 & \Tool{} & \textbf{100} & \textbf{100} & 6.43 \\

\midrule

    & \unconstrained{} & 87 & 91 & 6.7 \\
 & \greedyconstrained{} & 78 & 79 & 6.81 \\
 LLaDA-8B-I & \bestofbaseline{} & 99 & 99 & 6.73 \\
 & \Tool{} & \textbf{100} & \textbf{100} & 6.78 \\

\midrule

    & \unconstrained{} & 15 & 18 & 5.31 \\
 & \greedyconstrained{} & 23 & 23 & 5.41 \\
 Dream-B-7B & \bestofbaseline{} & 32 & 35 & 5.34 \\
 & \Tool{} & \textbf{100} & \textbf{100} & 5.45 \\

 \midrule

    & \unconstrained{} & 85 & 87 & 6.4 \\
 & \greedyconstrained{} & 30 & 30 & 6.51 \\
 Dream-I-7B & \bestofbaseline{} & 91 & 93 & 6.43 \\
 & \Tool{} & \textbf{100} & \textbf{100} & 6.55 \\

\bottomrule
    \end{tabular}
    \label{tab:json_schema_comparison}
\end{table*}

\textbf{Ablation Study on The Number of Diffusion Blocks: }
We analyze the performance of \Tool{} on GSM-Symbolic using different numbers of diffusion blocks. We run generation with a response length of 128, using 64 total diffusion steps, and each of 1, 2, and 8 blocks. As shown in Figure~\ref{fig:block_gsm}, \Tool{} performs well across all block settings, outperforming baselines in both functional and syntactic correctness. Further ablations on the number of diffusion blocks are presented in Appendix ~\ref{sec:block_abl}. 

\begin{figure}[!b]
\centering\small
 \vspace{-.1in}
\begin{subfigure}[b]{0.45\textwidth}
 \includegraphics[width=\textwidth]{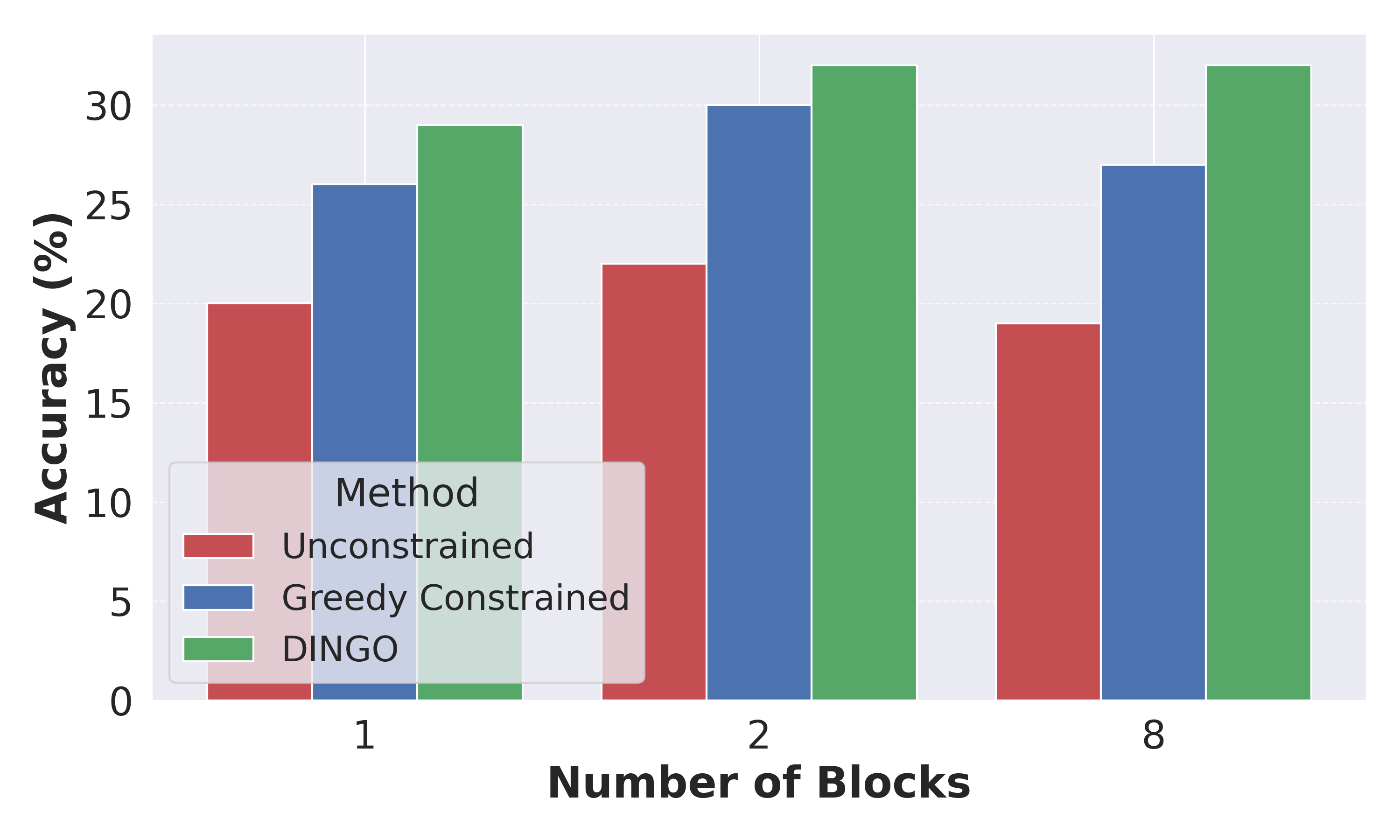}
 \caption{LLaDA-8B-I}
 \label{fig:block_gsm_a}
\end{subfigure}
\hspace{2mm}
\begin{subfigure}[b]{0.45\textwidth}
 \includegraphics[width=\textwidth]{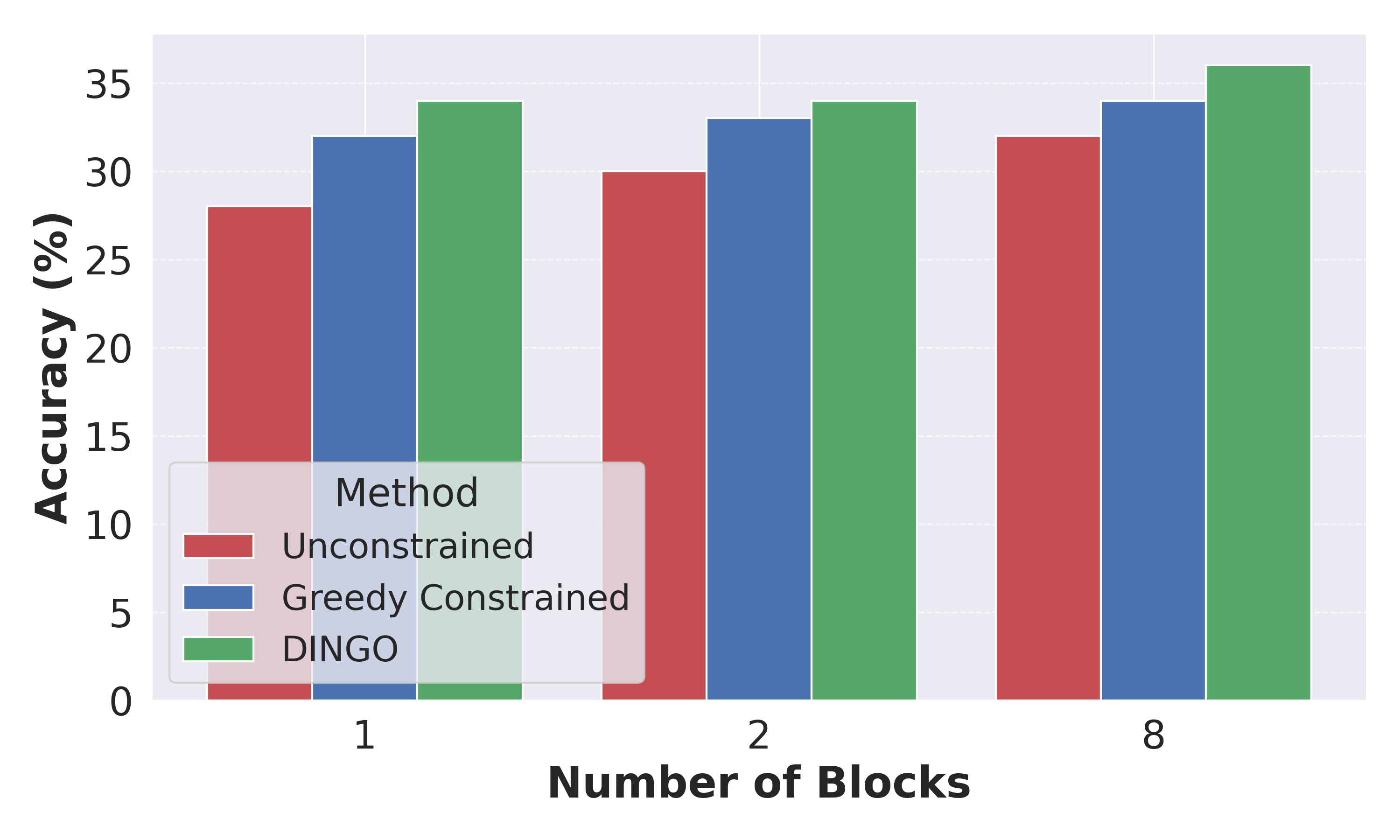}
 \caption{Dream-I-7B}
 \label{fig:block_gsm_b}
\end{subfigure}
\vspace{-.02in}
\caption{Ablation Study on The Number of Diffusion Blocks For GSM-Symbolic}
\label{fig:block_gsm}
\vspace{-.1in}
\end{figure} 





\section{Related Works}
To the best of our knowledge, our work is the first to provide provable guarantees on constrained adherence for inference in diffusion language models.
We next discuss the broader set of related works on diffusion language models and constrained language model decoding.

\noindent\textbf{Diffusion Language Models:}
Diffusion Language Models~\cite{austin2021structured} have emerged as a promising alternative to traditional autoregressive architectures~\cite{radford2019language}, offering advantages in parallel processing and controllability while addressing limitations in sequential generation. Recent advances in semi-autoregressive diffusion models~\cite{han2023ssdlmsemiautoregressivesimplexbaseddiffusion, llada, dream2025, arriola2025block} have significantly narrowed the performance gap with autoregressive counterparts. SSD-LM~\citep{han2023ssdlmsemiautoregressivesimplexbaseddiffusion} introduced a semi-autoregressive approach that performs diffusion over the natural vocabulary space, enabling flexible output length and improved controllability by iteratively generating blocks of text while facilitating local bidirectional context updates.
More recently, several breakthrough models have advanced the field: LLaDA (Large Language Diffusion with mAsking) achieved competitive performance with SOTA open-source autoregressive models of a similar size like LLaMA3-8B through a forward data masking process and a reverse process, parameterized by a vanilla Transformer to predict masked tokens~\citep{llada}. BD3-LMs (Block Discrete Denoising Diffusion Language Models)\cite{arriola2025block} introduced a novel approach that interpolates between discrete denoising diffusion and autoregressive models while supporting flexible-length generation and improving inference efficiency with KV caching. Most recently, Dream-7B\cite{dream2025} emerged as a strong open diffusion large language model that matches state-of-the-art autoregressive (AR) language models of similar size.

\noindent\textbf{Constrained Decoding with Autoregressive LLMs:}
Constrained decoding has shown promising results in augmenting autoregressive language models. Researchers have developed efficient techniques for ensuring syntactic correctness in regular~\citep{deutsch2019general,willard2023efficient,kuchnik2023validating} or context-free \citep{koo2024automata,ugare2024improving,dong2024xgrammar, banerjee2025crane} languages. Other works have focused on semantically constrained decoding through Monte Carlo sampling~\citep{lew2023sequential, loula2025syntactic} or backtracking~\citep{poesia2022synchromesh, ugare2025itergen}. 
\cite{lew2023sequential, park2024grammar} demonstrated that all these approaches that perform greedy constrained approximation for inference can distort the sampling distribution. \Tool{} addresses this challenge by performing optimal constrained sampling on blocks of tokens in a diffusion language model, which partially mitigates distribution distortion issues. 


Concurrent to our work, \cite{cardei2025constrainedlanguagegenerationdiscrete} performs constrained sampling from diffusion language models by minimizing a loss function defined using a surrogate model used for scoring constraints. 
However, their proposed method does not guarantee convergence to the constraint and necessitates a differentiable surrogate model. 
In contrast, our work focuses on providing provable guarantees for constraint satisfaction during inference without the need of an additional surrogate model.



\noindent\textbf{Limitations} 
\Tool is optimal for per-block generation, making it ideal for pure diffusion settings. However, this optimality may not hold in semi-autoregressive setups involving multiple blocks. Currently, our approach is limited to regular language constraints, while programming languages often belong to context-free or context-sensitive classes. As a result, our method cannot directly enforce these more expressive constraints, which have been addressed in prior work on autoregressive constrained generation. Nonetheless, we believe the core dynamic programming framework behind \Tool can be extended to support richer language classes in future work. Moreover, important constraints like toxicity mitigation fall outside formal language classes, highlighting directions for further research.

\vspace{-5pt}
\section{Conclusion}
We presented \Tool, a novel dynamic programming approach that enables diffusion LLMs to generate outputs that strictly adhere to regular language constraints while preserving the model's underlying distribution. 
Our method overcomes the limitations of traditional constrained decoding algorithms that fail with parallel token prediction. 
Our experimental results on symbolic math and JSON generation tasks demonstrate significant improvements over unconstrained inference, demonstrates that \Tool is an effective solution for structured output generation with diffusion models. 
Our work bridges an important gap in making diffusion LLMs reliable for applications requiring formal guarantees.


\clearpage
\newpage
\bibliography{main}
\bibliographystyle{plainnat}

\newpage
\section*{NeurIPS Paper Checklist}

The checklist is designed to encourage best practices for responsible machine learning research, addressing issues of reproducibility, transparency, research ethics, and societal impact. Do not remove the checklist: {\bf The papers not including the checklist will be desk rejected.} The checklist should follow the references and follow the (optional) supplemental material.  The checklist does NOT count towards the page
limit. 

Please read the checklist guidelines carefully for information on how to answer these questions. For each question in the checklist:
\begin{itemize}
    \item You should answer \answerYes{}, \answerNo{}, or \answerNA{}.
    \item \answerNA{} means either that the question is Not Applicable for that particular paper or the relevant information is Not Available.
    \item Please provide a short (1–2 sentence) justification right after your answer (even for NA). 
\end{itemize}

{\bf The checklist answers are an integral part of your paper submission.} They are visible to the reviewers, area chairs, senior area chairs, and ethics reviewers. You will be asked to also include it (after eventual revisions) with the final version of your paper, and its final version will be published with the paper.

The reviewers of your paper will be asked to use the checklist as one of the factors in their evaluation. While "\answerYes{}" is generally preferable to "\answerNo{}", it is perfectly acceptable to answer "\answerNo{}" provided a proper justification is given (e.g., "error bars are not reported because it would be too computationally expensive" or "we were unable to find the license for the dataset we used"). In general, answering "\answerNo{}" or "\answerNA{}" is not grounds for rejection. While the questions are phrased in a binary way, we acknowledge that the true answer is often more nuanced, so please just use your best judgment and write a justification to elaborate. All supporting evidence can appear either in the main paper or the supplemental material, provided in appendix. If you answer \answerYes{} to a question, in the justification please point to the section(s) where related material for the question can be found.

IMPORTANT, please:
\begin{itemize}
    \item {\bf Delete this instruction block, but keep the section heading ``NeurIPS Paper Checklist"},
    \item  {\bf Keep the checklist subsection headings, questions/answers and guidelines below.}
    \item {\bf Do not modify the questions and only use the provided macros for your answers}.
\end{itemize}


\begin{enumerate}

\item {\bf Claims}
    \item[] Question: Do the main claims made in the abstract and introduction accurately reflect the paper's contributions and scope?
    \item[] Answer: \answerYes{} 
    \item[] Justification: Yes, the abstract and introductions claims are validated through the proposed approach and experimental results. 
    \item[] Guidelines:
    \begin{itemize}
        \item The answer NA means that the abstract and introduction do not include the claims made in the paper.
        \item The abstract and/or introduction should clearly state the claims made, including the contributions made in the paper and important assumptions and limitations. A No or NA answer to this question will not be perceived well by the reviewers. 
        \item The claims made should match theoretical and experimental results, and reflect how much the results can be expected to generalize to other settings. 
        \item It is fine to include aspirational goals as motivation as long as it is clear that these goals are not attained by the paper. 
    \end{itemize}

\item {\bf Limitations}
    \item[] Question: Does the paper discuss the limitations of the work performed by the authors?
    \item[] Answer: \answerYes{} 
    \item[] Justification: Limitations are discussed in the Related Works section. 
    \item[] Guidelines:
    \begin{itemize}
        \item The answer NA means that the paper has no limitation while the answer No means that the paper has limitations, but those are not discussed in the paper. 
        \item The authors are encouraged to create a separate "Limitations" section in their paper.
        \item The paper should point out any strong assumptions and how robust the results are to violations of these assumptions (e.g., independence assumptions, noiseless settings, model well-specification, asymptotic approximations only holding locally). The authors should reflect on how these assumptions might be violated in practice and what the implications would be.
        \item The authors should reflect on the scope of the claims made, e.g., if the approach was only tested on a few datasets or with a few runs. In general, empirical results often depend on implicit assumptions, which should be articulated.
        \item The authors should reflect on the factors that influence the performance of the approach. For example, a facial recognition algorithm may perform poorly when image resolution is low or images are taken in low lighting. Or a speech-to-text system might not be used reliably to provide closed captions for online lectures because it fails to handle technical jargon.
        \item The authors should discuss the computational efficiency of the proposed algorithms and how they scale with dataset size.
        \item If applicable, the authors should discuss possible limitations of their approach to address problems of privacy and fairness.
        \item While the authors might fear that complete honesty about limitations might be used by reviewers as grounds for rejection, a worse outcome might be that reviewers discover limitations that aren't acknowledged in the paper. The authors should use their best judgment and recognize that individual actions in favor of transparency play an important role in developing norms that preserve the integrity of the community. Reviewers will be specifically instructed to not penalize honesty concerning limitations.
    \end{itemize}

\item {\bf Theory assumptions and proofs}
    \item[] Question: For each theoretical result, does the paper provide the full set of assumptions and a complete (and correct) proof?
    \item[] Answer: \answerYes{} 
    \item[] Justification: Yes, refer to Section ~\ref{sec:algoDetail}.
    \item[] Guidelines:
    \begin{itemize}
        \item The answer NA means that the paper does not include theoretical results. 
        \item All the theorems, formulas, and proofs in the paper should be numbered and cross-referenced.
        \item All assumptions should be clearly stated or referenced in the statement of any theorems.
        \item The proofs can either appear in the main paper or the supplemental material, but if they appear in the supplemental material, the authors are encouraged to provide a short proof sketch to provide intuition. 
        \item Inversely, any informal proof provided in the core of the paper should be complemented by formal proofs provided in appendix or supplemental material.
        \item Theorems and Lemmas that the proof relies upon should be properly referenced. 
    \end{itemize}

    \item {\bf Experimental result reproducibility}
    \item[] Question: Does the paper fully disclose all the information needed to reproduce the main experimental results of the paper to the extent that it affects the main claims and/or conclusions of the paper (regardless of whether the code and data are provided or not)?
    \item[] Answer: \answerYes{} 
    \item[] Justification: All experiment details are in Section ~\ref{sec:experiments}. 
    \item[] Guidelines:
    \begin{itemize}
        \item The answer NA means that the paper does not include experiments.
        \item If the paper includes experiments, a No answer to this question will not be perceived well by the reviewers: Making the paper reproducible is important, regardless of whether the code and data are provided or not.
        \item If the contribution is a dataset and/or model, the authors should describe the steps taken to make their results reproducible or verifiable. 
        \item Depending on the contribution, reproducibility can be accomplished in various ways. For example, if the contribution is a novel architecture, describing the architecture fully might suffice, or if the contribution is a specific model and empirical evaluation, it may be necessary to either make it possible for others to replicate the model with the same dataset, or provide access to the model. In general. releasing code and data is often one good way to accomplish this, but reproducibility can also be provided via detailed instructions for how to replicate the results, access to a hosted model (e.g., in the case of a large language model), releasing of a model checkpoint, or other means that are appropriate to the research performed.
        \item While NeurIPS does not require releasing code, the conference does require all submissions to provide some reasonable avenue for reproducibility, which may depend on the nature of the contribution. For example
        \begin{enumerate}
            \item If the contribution is primarily a new algorithm, the paper should make it clear how to reproduce that algorithm.
            \item If the contribution is primarily a new model architecture, the paper should describe the architecture clearly and fully.
            \item If the contribution is a new model (e.g., a large language model), then there should either be a way to access this model for reproducing the results or a way to reproduce the model (e.g., with an open-source dataset or instructions for how to construct the dataset).
            \item We recognize that reproducibility may be tricky in some cases, in which case authors are welcome to describe the particular way they provide for reproducibility. In the case of closed-source models, it may be that access to the model is limited in some way (e.g., to registered users), but it should be possible for other researchers to have some path to reproducing or verifying the results.
        \end{enumerate}
    \end{itemize}

\item {\bf Open access to data and code}
    \item[] Question: Does the paper provide open access to the data and code, with sufficient instructions to faithfully reproduce the main experimental results, as described in supplemental material?
    \item[] Answer:  \answerYes{}
    \item[] Justification: Code is provided. 
    \item[] Guidelines:
    \begin{itemize}
        \item The answer NA means that paper does not include experiments requiring code.
        \item Please see the NeurIPS code and data submission guidelines (\url{https://nips.cc/public/guides/CodeSubmissionPolicy}) for more details.
        \item While we encourage the release of code and data, we understand that this might not be possible, so “No” is an acceptable answer. Papers cannot be rejected simply for not including code, unless this is central to the contribution (e.g., for a new open-source benchmark).
        \item The instructions should contain the exact command and environment needed to run to reproduce the results. See the NeurIPS code and data submission guidelines (\url{https://nips.cc/public/guides/CodeSubmissionPolicy}) for more details.
        \item The authors should provide instructions on data access and preparation, including how to access the raw data, preprocessed data, intermediate data, and generated data, etc.
        \item The authors should provide scripts to reproduce all experimental results for the new proposed method and baselines. If only a subset of experiments are reproducible, they should state which ones are omitted from the script and why.
        \item At submission time, to preserve anonymity, the authors should release anonymized versions (if applicable).
        \item Providing as much information as possible in supplemental material (appended to the paper) is recommended, but including URLs to data and code is permitted.
    \end{itemize}

\item {\bf Experimental setting/details}
    \item[] Question: Does the paper specify all the training and test details (e.g., data splits, hyperparameters, how they were chosen, type of optimizer, etc.) necessary to understand the results?
    \item[] Answer: \answerYes{} 
    \item[] Justification: All experiment details are mentioned in Section ~\ref{sec:experiments}.
    \item[] Guidelines:
    \begin{itemize}
        \item The answer NA means that the paper does not include experiments.
        \item The experimental setting should be presented in the core of the paper to a level of detail that is necessary to appreciate the results and make sense of them.
        \item The full details can be provided either with the code, in appendix, or as supplemental material.
    \end{itemize}

\item {\bf Experiment statistical significance}
    \item[] Question: Does the paper report error bars suitably and correctly defined or other appropriate information about the statistical significance of the experiments?
    \item[] Answer: \answerNo{} 
    \item[] Justification: All experiments used greedy decoding, which is deterministic. The results will not change between runs. 
    \item[] Guidelines:
    \begin{itemize}
        \item The answer NA means that the paper does not include experiments.
        \item The authors should answer "Yes" if the results are accompanied by error bars, confidence intervals, or statistical significance tests, at least for the experiments that support the main claims of the paper.
        \item The factors of variability that the error bars are capturing should be clearly stated (for example, train/test split, initialization, random drawing of some parameter, or overall run with given experimental conditions).
        \item The method for calculating the error bars should be explained (closed form formula, call to a library function, bootstrap, etc.)
        \item The assumptions made should be given (e.g., Normally distributed errors).
        \item It should be clear whether the error bar is the standard deviation or the standard error of the mean.
        \item It is OK to report 1-sigma error bars, but one should state it. The authors should preferably report a 2-sigma error bar than state that they have a 96\% CI, if the hypothesis of Normality of errors is not verified.
        \item For asymmetric distributions, the authors should be careful not to show in tables or figures symmetric error bars that would yield results that are out of range (e.g. negative error rates).
        \item If error bars are reported in tables or plots, The authors should explain in the text how they were calculated and reference the corresponding figures or tables in the text.
    \end{itemize}

\item {\bf Experiments compute resources}
    \item[] Question: For each experiment, does the paper provide sufficient information on the computer resources (type of compute workers, memory, time of execution) needed to reproduce the experiments?
    \item[] Answer: \answerYes{} 
    \item[] Justification: Yes, the details are provided in Section ~\ref{sec:experiments}. 
    \item[] Guidelines:
    \begin{itemize}
        \item The answer NA means that the paper does not include experiments.
        \item The paper should indicate the type of compute workers CPU or GPU, internal cluster, or cloud provider, including relevant memory and storage.
        \item The paper should provide the amount of compute required for each of the individual experimental runs as well as estimate the total compute. 
        \item The paper should disclose whether the full research project required more compute than the experiments reported in the paper (e.g., preliminary or failed experiments that didn't make it into the paper). 
    \end{itemize}
    
\item {\bf Code of ethics}
    \item[] Question: Does the research conducted in the paper conform, in every respect, with the NeurIPS Code of Ethics \url{https://neurips.cc/public/EthicsGuidelines}?
    \item[] Answer: \answerYes{} 
    \item[] Justification: The paper conforms to NeurIPS Code of Ethics
    \item[] Guidelines:
    \begin{itemize}
        \item The answer NA means that the authors have not reviewed the NeurIPS Code of Ethics.
        \item If the authors answer No, they should explain the special circumstances that require a deviation from the Code of Ethics.
        \item The authors should make sure to preserve anonymity (e.g., if there is a special consideration due to laws or regulations in their jurisdiction).
    \end{itemize}

\item {\bf Broader impacts}
    \item[] Question: Does the paper discuss both potential positive societal impacts and negative societal impacts of the work performed?
    \item[] Answer: \answerYes{} 
    \item[] Justification: The paper discusses the impact of the work in the Introduction and validate through the experiments.  
    \item[] Guidelines:
    \begin{itemize}
        \item The answer NA means that there is no societal impact of the work performed.
        \item If the authors answer NA or No, they should explain why their work has no societal impact or why the paper does not address societal impact.
        \item Examples of negative societal impacts include potential malicious or unintended uses (e.g., disinformation, generating fake profiles, surveillance), fairness considerations (e.g., deployment of technologies that could make decisions that unfairly impact specific groups), privacy considerations, and security considerations.
        \item The conference expects that many papers will be foundational research and not tied to particular applications, let alone deployments. However, if there is a direct path to any negative applications, the authors should point it out. For example, it is legitimate to point out that an improvement in the quality of generative models could be used to generate deepfakes for disinformation. On the other hand, it is not needed to point out that a generic algorithm for optimizing neural networks could enable people to train models that generate Deepfakes faster.
        \item The authors should consider possible harms that could arise when the technology is being used as intended and functioning correctly, harms that could arise when the technology is being used as intended but gives incorrect results, and harms following from (intentional or unintentional) misuse of the technology.
        \item If there are negative societal impacts, the authors could also discuss possible mitigation strategies (e.g., gated release of models, providing defenses in addition to attacks, mechanisms for monitoring misuse, mechanisms to monitor how a system learns from feedback over time, improving the efficiency and accessibility of ML).
    \end{itemize}
    
\item {\bf Safeguards}
    \item[] Question: Does the paper describe safeguards that have been put in place for responsible release of data or models that have a high risk for misuse (e.g., pretrained language models, image generators, or scraped datasets)?
    \item[] Answer: \answerNA{} 
    \item[] Justification: No such risks. 
    \item[] Guidelines:
    \begin{itemize}
        \item The answer NA means that the paper poses no such risks.
        \item Released models that have a high risk for misuse or dual-use should be released with necessary safeguards to allow for controlled use of the model, for example by requiring that users adhere to usage guidelines or restrictions to access the model or implementing safety filters. 
        \item Datasets that have been scraped from the Internet could pose safety risks. The authors should describe how they avoided releasing unsafe images.
        \item We recognize that providing effective safeguards is challenging, and many papers do not require this, but we encourage authors to take this into account and make a best faith effort.
    \end{itemize}

\item {\bf Licenses for existing assets}
    \item[] Question: Are the creators or original owners of assets (e.g., code, data, models), used in the paper, properly credited and are the license and terms of use explicitly mentioned and properly respected?
    \item[] Answer: \answerYes{} 
    \item[] Justification: All citations for assets used are given. 
    \item[] Guidelines:
    \begin{itemize}
        \item The answer NA means that the paper does not use existing assets.
        \item The authors should cite the original paper that produced the code package or dataset.
        \item The authors should state which version of the asset is used and, if possible, include a URL.
        \item The name of the license (e.g., CC-BY 4.0) should be included for each asset.
        \item For scraped data from a particular source (e.g., website), the copyright and terms of service of that source should be provided.
        \item If assets are released, the license, copyright information, and terms of use in the package should be provided. For popular datasets, \url{paperswithcode.com/datasets} has curated licenses for some datasets. Their licensing guide can help determine the license of a dataset.
        \item For existing datasets that are re-packaged, both the original license and the license of the derived asset (if it has changed) should be provided.
        \item If this information is not available online, the authors are encouraged to reach out to the asset's creators.
    \end{itemize}

\item {\bf New assets}
    \item[] Question: Are new assets introduced in the paper well documented and is the documentation provided alongside the assets?
    \item[] Answer: \answerYes{}
    \item[] Justification: The details about the algorithm and limitations are described in the paper
    \item[] Guidelines:
    \begin{itemize}
        \item The answer NA means that the paper does not release new assets.
        \item Researchers should communicate the details of the dataset/code/model as part of their submissions via structured templates. This includes details about training, license, limitations, etc. 
        \item The paper should discuss whether and how consent was obtained from people whose asset is used.
        \item At submission time, remember to anonymize your assets (if applicable). You can either create an anonymized URL or include an anonymized zip file.
    \end{itemize}

\item {\bf Crowdsourcing and research with human subjects}
    \item[] Question: For crowdsourcing experiments and research with human subjects, does the paper include the full text of instructions given to participants and screenshots, if applicable, as well as details about compensation (if any)? 
    \item[] Answer: \answerNA{} 
    \item[] Justification: No crowdsourcing nor research with human subjects.
    \item[] Guidelines:
    \begin{itemize}
        \item The answer NA means that the paper does not involve crowdsourcing nor research with human subjects.
        \item Including this information in the supplemental material is fine, but if the main contribution of the paper involves human subjects, then as much detail as possible should be included in the main paper. 
        \item According to the NeurIPS Code of Ethics, workers involved in data collection, curation, or other labor should be paid at least the minimum wage in the country of the data collector. 
    \end{itemize}

\item {\bf Institutional review board (IRB) approvals or equivalent for research with human subjects}
    \item[] Question: Does the paper describe potential risks incurred by study participants, whether such risks were disclosed to the subjects, and whether Institutional Review Board (IRB) approvals (or an equivalent approval/review based on the requirements of your country or institution) were obtained?
    \item[] Answer: \answerNA{} 
    \item[] Justification: No human subject involved. 
    \item[] Guidelines:
    \begin{itemize}
        \item The answer NA means that the paper does not involve crowdsourcing nor research with human subjects.
        \item Depending on the country in which research is conducted, IRB approval (or equivalent) may be required for any human subjects research. If you obtained IRB approval, you should clearly state this in the paper. 
        \item We recognize that the procedures for this may vary significantly between institutions and locations, and we expect authors to adhere to the NeurIPS Code of Ethics and the guidelines for their institution. 
        \item For initial submissions, do not include any information that would break anonymity (if applicable), such as the institution conducting the review.
    \end{itemize}

\item {\bf Declaration of LLM usage}
    \item[] Question: Does the paper describe the usage of LLMs if it is an important, original, or non-standard component of the core methods in this research? Note that if the LLM is used only for writing, editing, or formatting purposes and does not impact the core methodology, scientific rigorousness, or originality of the research, declaration is not required.
    \item[] Answer: \answerNo{} 
    \item[] Justification: LLM is not used for important, original, or non-standard component of the research. 
    \item[] Guidelines:
    \begin{itemize}
        \item The answer NA means that the core method development in this research does not involve LLMs as any important, original, or non-standard components.
        \item Please refer to our LLM policy (\url{https://neurips.cc/Conferences/2025/LLM}) for what should or should not be described.
    \end{itemize}

\end{enumerate}




\appendix
\section{Transformer and Remasking Step}
\label{appen:tansformAndRemask}
\begin{algorithm}[H]
    \caption{Diffusion Step}
    \label{alg:diffusionStep}
    \begin{algorithmic}[1]
    \Require Transformer $\transform{n}$, full output length $n$, prompt $\vect{p}$, input prompt length $m$, block length $\blocklen$, current diffusion step $i$, total diffusion steps $T$, vocabulary $V$.
    \State $\vect{x} \leftarrow \vect{p}\concat\bot^{\blocklen}$ \Comment{Pad the input prompt where $n = m + \blocklen$}
    \State $\vect{v_1}\dots\vect{v_n} \gets \transform{n}(\vect{x})$ \Comment{$\vect{v_i} \in \mathbb{R}^{|\inalphaLLM{}|}_{+}$ output distribution at position $i$}
    \State $\vect{l} \gets \text{RemaskPositions}(\vect{v}_{m+1}, \dots, \vect{v}_{m+d}, i, T)$ \Comment{Decides which positions to remask}
    \For{$j \in \vect{l}$}
    \State $\vect{v_j} \gets \vect{0}$
    \State $\vect{v_j}[\bot] \gets 1$ \Comment{Set probability of all tokens except $\bot$ to $0$.}
    \EndFor
    \State $\vect{r} \gets \decode{n}(\vect{v_1}\dots\vect{v_n})$ \Comment{Decoding that outputs response with first $m$ tokens are input prompt $\vect{p}$} 
    \State \Return \vect{r}
    \end{algorithmic}
\end{algorithm}

We describe the two key components of a single diffusion step:
a) \textbf{Transformer step:} Computes the output distribution over all tokens in the vocabulary (line 2 Algo.~\ref{alg:diffusionStep}).
b) \textbf{Remasking step:} Based on the output from the transformer step, it greedily decides which token positions to mask. The remasking step can be viewed as updating the output distribution such that, at the masked positions, the mask token $\bot$ is assigned probability 1, while all other tokens receive probability 0 (lines 3 -- 6 Algo.~\ref{alg:diffusionStep}).
Popular greedy remasking strategies include (line 3 Algo.~\ref{alg:diffusionStep}):
(i) \textit{Random:} Masks tokens at randomly selected positions \cite{llada}.
(ii) \textit{Top token probability:} Masks positions where the top-predicted token has the lowest probability \cite{llada}.
(iii) \textit{Entropy-based:} Computes the entropy of the output distribution at each position and masks the positions with the highest entropy \cite{dream2025}.

The number of token positions to remask at the $i$-th step typically depends on the total number of diffusion steps $T$ and the block length $\blocklen$. At step $0$, all $\blocklen$ positions are masked, and the number of masked tokens decreases linearly to 0 over $T$ steps. Thus, at the $i$-th step, the number of masked tokens is given by $\left\lfloor \frac{\blocklen \times (T - i)}{T} \right\rfloor$.

\section{Proofs}
\label{appen:proofs}
\correctnessThm*

\begin{proof}
We assume that $\exists \vect{x} \in L_P(\regex) \cap \nomask^{\blocklen} \wedge \left(P(\vect{x} | \vect{v}_{m+1}\dots \vect{v}_{m+d})\geq 0\right)$ then the decoded string $\vect{r}$ satisfy the soundness property (see Definition~\ref{def:correctness}). In other words, if there is at least one fully unmasked valid prefix with non-zero probability then \Tool retrieves a valid string. 

\noindent We show this by induction on the position of tokens. Before moving to the proof, we first define extended transition function $\transitions{}^*$ when $\transitions{}: \dfastates \times \inalphaLLM{} \to \expstate$ outputs a set of states instead of single state due to mask token $\bot$. In this case, for any string $\vect{w}\in \inalphaLLM{*}$, $\transitions{}^{*}(\vect{w}, q_0)$ represents the state of reachable states starting from $q_0$. This can be defined as $\transitions{}^{*}(\{\}, q_0) = \{q_0\}$ and $\transitions{}^{*}(t_1\cdots t_{m+1}, q_0) = \cup_{q \in \transitions{}^{*}(t_1\cdots t_m, q_0)} \transitions{}(q, t_{m+1})$. 

\begin{enumerate}
\item Let $0 \leq i \leq d$, and let $t_1 \dots t_i \in \inalphaLLM{i}$ denote any token sequence with positive probability mass $\prod_{j=1}^{i} \vect{v}_{m+j}[t_j] > 0$. Let $q \in \tokentrans{}^{*}( t_{1}\cdots t_i, \dfastart)$. Then, $\dpstate{i}{q} > 0$. We prove this using induction on $i$. 
\begin{enumerate}
\item Base case $i = 0$: For empty strings only start state $q_0$ is reachable. \Tool initializes $\dpstate{0}{\dfastart} = 1 > 0$ and for all $q \neq q_0$, $\dpstate{0}{q} = 0$. (lines 1 -- 3 in Algo.~\ref{alg:dp_block}). 

\item Inductive Step: 
At position $i + 1$, let $t_1 \dots t_{i+1} \in \inalphaLLM{i+1}$ s.t. $\prod_{j=1}^{i + 1} \vect{v}_{m+j}[t_j] > 0$. Let $q' \in \tokentrans{}^{*}( t_{1}\cdots t_i, \dfastart)$ and $q \in \tokentrans{}(q', t_{i+1})$. By the inductive hypothesis, for all such $q'$ $\dpstate{i}{q'} > 0$. 
Recall, 
\begin{align*}
\cost{q}{q'}{i+1} = \begin{cases}
    \max\limits_{t \in \inalphaLLM{}}\; \vect{v}_{m + i+1}(t) \text{ s.t. $q \in \tokentrans{}(q', t)$} \\
    0  \text{  if $q, q'$ are not connected}   
\end{cases} & \dpstate{i+1}{q} = \max_{q'\in \dfastates} \dpstate{i}{q'} \times \cost{q}{q'}{i+1}
\end{align*}

Thus, $\cost{q}{q'}{i+1} \geq \vect{v}_{m + i+1}(t_{i+1}) > 0$ which implies $\dpstate{i}{q'} \times \cost{q}{q'}{i+1} > 0$. 
Therefore, $\dpstate{i+1}{q} = \max_{q'\in \dfastates} \dpstate{i}{q'} \times \cost{q}{q'}{i+1} > 0$.
\end{enumerate}

\item Since $\pre{\regex} \cap \nomask^{d} \neq \{\}$ by assumption, there exists some $\vect{y} \in \pre{\regex} \cap \nomask^{d}$. By the Definition ~\ref{def:liveState}, $q_l = \tokentrans{t}^{*}(\vect{y}, \dfastart) \in \live{Q}$. From the induction above, $\dpstate{d}{q_l} > 0$. From line 16 in Algo.~\ref{alg:dp_block}, $q_{max} = \argmax_{q\in Q_l} \dpstate{\blocklen}{q}$. Thus, by the definition of $\argmax$, $\dpstate{d}{q_{max}} \geq \dpstate{d}{q_l} > 0$. 

\item In lines 20-22 in Algo.~\ref{alg:dp_block}), \Tool{} reconstructs a d-length sequence $r = t_1 \dots t_d \in \inalphaLLM{d}$ such that $q_{max} \in \tokentrans{}^{*}(r, \dfastart)$. 
For any $t_j \in r$, if $t_j = \bot$, choose any token $\tau_j \in \nomask$ satisfying $\tokentrans{t}( q_{j-1}, \tau_j) = q_j$ where $q_j = \tokentrans{t}^{*}( t_1 \dots t_j, \dfastart)$. By definition of $\tokentrans{\bot}$, $\tau_j$ exists. Substituting every $\bot$ in this manner yields, by Definition ~\ref{def:substitutionset}, $\vect{x} = \vect{x_1} \dots \vect{x_d} \in \nomask^{d}$. $\vect{x} \in \sub{\vect{r}}$. $\tokentrans{t}^{*}(\vect{x} , \dfastart) = q_{max}$. From above, $\dpstate{d}{q_{max}} > 0$.  

\item Since $q_{max} \in Q_l$, by Definition ~\ref{def:liveState}, $\exists w \in \alphabets^*$ s.t.  $\transitions^*(\vect{w}, q_{max}) \in \dfafinal$. Equivalently, $\vect{x} \concat w \in \lang{\regex}$, hence $\vect{x}  \in \pre{\regex}$. 
\end{enumerate}
\end{proof}

\optimalityThm*
\begin{proof}
\begin{enumerate}
\item First, we show that $P(\opt{\vect{r}} \;|\; \vect{v}_{m+1}\dots\vect{v}_{n}) = \dpstate{d}{q_{max}}$, or equivalently $\prod_{j=1}^{d} \vect{v}_{m+j}[\opt{\vect{r_j}}] = \dpstate{d}{q_{max}}$. Let $\opt{\vect{r}} = \opt{\vect{r_1}} \dots \opt{\vect{r_d}}$ and $0 \leq i \leq d$. We prove by induction on $i$ that if \Tool's backtracking (lines 19 -- 23 in Algo ~\ref{alg:dp_block}) has brought us to state $q \in Q$ at position $i$, then $\dpstate{i}{q} = \prod_{j=1}^{i} \vect{v}_{m+j}[\opt{\vect{r_j}}]$.
\begin{enumerate}
\item Base case i = 0: $\dpstate{0}{\dfastart} = 1 = \prod_{j=1}^{0} \vect{v}_{m+j}[\opt{\vect{r_j}}]$.
\item Inductive Step: At position $i$, let $q', \opt{\vect{r_i}} = \dppar{i}{q}$ (line 21 in Algo ~\ref{alg:dp_block}). From lines 14 -- 15 in Algo ~\ref{alg:dp_block}, $\dpstate{i}{q} = \dpstate{i - 1}{q'} \times \vect{v}_{m + i}(\opt{\vect{r_i}}) $. By the inductive hypothesis, $\dpstate{i - 1}{q'} = \prod_{j=1}^{i - 1} \vect{v}_{m+j}[\opt{\vect{r_j}}]$. Thus, $\dpstate{i}{q} = \prod_{j=1}^{i - 1} \vect{v}_{m+j}[\opt{\vect{r_j}}] \times \vect{v}_{m + i}(\opt{\vect{r_i}}) = \prod_{j=1}^{i} \vect{v}_{m+j}[\opt{\vect{r_j}}]$.
\end{enumerate}
Let $q_d \in \tokentrans{}^{*}(\opt{\vect{r_1}} \dots \opt{\vect{r_d}}, \dfastart)$. Since $q_d = q_{max}$ (line 19 in Algo ~\ref{alg:dp_block}), $\dpstate{d}{q_{max}} = \prod_{j=1}^{d} \vect{v}_{m+j}[\opt{\vect{r_j}}] = P(\opt{\vect{r}} \;|\; \vect{v}_{m+1}\dots\vect{v}_{n})$.

\item We show that for every valid string $\vect{r}' = \vect{r_1}' \dots \vect{r_d}'$ satisfying $\exists \vect{x} \in V^*. (\vect{x} \in \sub{\vect{r'}}) \wedge (\vect{x} \in \pre{\regex})$, $\prod_{j=1}^{d} \vect{v}_{m+j}[\vect{r_j}'] \leq \dpstate{d}{q_{max}}$. Let $0 \leq i \leq d$ and $q \in \tokentrans{}^{*}( \vect{r_1}' \dots \vect{r_i}', \dfastart)$. We show that $\prod_{j=1}^{i} \vect{v}_{m+j}[\vect{r_j}'] \leq \dpstate{d}{q}$ using induction on $i$.
\begin{enumerate}
\item Base case i = 0: $\dpstate{0}{\dfastart} = 1 = \prod_{j=1}^{0} \vect{v}_{m+j}[\vect{r_j}']$.
\item Inductive Step: At position $i + 1$, let $q' \in \tokentrans{}^{*}( \vect{r_1}' \cdots \vect{r_i}', \dfastart)$ and $q \in \tokentrans{}(q', \vect{r_{i + 1}}')$. By the inductive hypothesis, $\prod_{j=1}^{i} \vect{v}_{m+j}[\vect{r_j}'] \leq \dpstate{i}{q'}$. 
Recall, 
\begin{align*}
\cost{q}{q'}{i+1} = \begin{cases}
    \max\limits_{t \in \inalphaLLM{}}\; \vect{v}_{m + i+1}(t) \text{ s.t. $q \in \tokentrans{}(q', t)$} \\
    0  \text{  if $q, q'$ are not connected}   
\end{cases} & \dpstate{i+1}{q} = \max_{q'\in \dfastates} \dpstate{i}{q'} \times \cost{q}{q'}{i+1}
\end{align*}
Thus, $\vect{v}_{m + i+1}(\vect{r_{i + 1}}') \leq \cost{q}{q'}{i+1}$. Hence, $\prod_{j=1}^{i + 1} \vect{v}_{m+j}[\vect{r_j}'] = \prod_{j=1}^{i} \vect{v}_{m+j}[\vect{r_j}'] \times \vect{v}_{m + i+1}(\vect{r_{i + 1}}') \leq \dpstate{i}{q'} \times \cost{q}{q'}{i+1} \leq \dpstate{i + 1}{q}$. 

\end{enumerate}
Let $q_d \in \tokentrans{}^{*}( \vect{r_1}' \dots \vect{r_d}', \dfastart)$.
Since $\vect{x} \in V^*. (\vect{x} \in \sub{\vect{r'}}) \wedge (\vect{x} \in \pre{\regex})$, $q_d \in Q_l$. From line 16 in Algo.~\ref{alg:dp_block}, $q_{max} = \argmax_{q\in Q_l} \dpstate{\blocklen}{q}$. Thus, by the definition of $\argmax$, $\dpstate{d}{q_d} \leq \dpstate{d}{q_{max}}$. From the inductive hypothesis above, $\prod_{j=1}^{d} \vect{v}_{m+j}[\vect{r_j}'] \leq \dpstate{d}{q_d} \leq \dpstate{d}{q_{max}}$.

\item Hence, $P(\vect{r'} \;|\; \vect{v}_{m+1}\dots\vect{v}_{n}) = \prod_{j=1}^{d} \vect{v}_{m+j}[\vect{r_j}'] \leq \dpstate{d}{q_{max}} = \prod_{j=1}^{d} \vect{v}_{m+j}[\opt{\vect{r_j}}] = P(\opt{\vect{r}} \;|\; \vect{v}_{m+1}\dots\vect{v}_{n})$. 
\end{enumerate}
\end{proof}

\section{Time complexity analysis of parallelized DINGO DP}
\label{appen:complexity}
\begin{algorithm}[H]
    \small
    \caption{\Tool{} DP}
    \label{alg:dpParallel}
    \begin{algorithmic}[1]
    \Require $\dfastart$, block length $\blocklen$, probability vectors $\vect{v}_1, \dots \vect{v}_{\blocklen}$ for the current block,  $\live{Q}$, $\dfastates$, $\tokentrans{}$.
    \State $\dpstate{0}{q} \gets 0$ for all $(q \in \dfastates) \wedge (q \neq q_0)$
    \State $\dpstate{0}{\dfastart} \gets 1$
    \State $\dppar{0}{q} \gets (\text{None}, \text{None})$  for all $(q \in \dfastates)$ \Comment{Initialization of the DP}
    \State $\costS{i} \gets \{\}$ for all $i \in \{1,\dots,\blocklen\}$\Comment{maximum token probability transtion $(q' \to  q)$ at position $i$}
    \State $\tranS{i} \gets \{\}$ for all $i \in \{1,\dots,\blocklen\}$ \Comment{token for the maximum probability transition $(q' \to q)$}
    \For{\textcolor{blue}{$i \in \{1,\dots, \blocklen\}$}} \Comment{\textcolor{blue}{The computation along all $\blocklen$ can be parallelized}}
    \State \textcolor{blue}{\# Parallelize for each $\{1, \dots \blocklen\}$}
    \For{$(q \in \dfastates)$}
    \For{$t \in \inalphaLLM{}$}
    \State $q' \gets \delta(q, t)$
    \State $\costS{i}(q, q'), \tranS{i}(q, q') \gets $ MaxTransition$(\vect{v}_i, t, q, q')$   
    \EndFor
    \EndFor
    \EndFor
    \For{$i \in \{1,\dots, \blocklen\}$} \Comment{DP computation loop}
    \For{$(q \in \dfastates)\wedge (q' \in \dfastates)$}
        \If{$\dpstate{i}{q} < \dpstate{i-1}{q'}\times \costS{i}(q, q')$ }
        \State $\dpstate{i}{q} \gets \dpstate{i-1}{q'}\times \costS{i}(q, q')$ \Comment{Update maximum probability path to $q$}
        \State $\dppar{i}{q} \gets (q', \tranS{i}(q, q'))$  \Comment{Update the parents accordingly}
        \EndIf
    \EndFor
    \EndFor
    \State $q_{max} \gets \argmax_{q\in Q_l} \dpstate{\blocklen}{q}$
    \If{$\dpstate{\blocklen}{q_{max}} = 0$} \Comment{No valid prefixes}
     \State   \Return None, $q_{max}$
    \EndIf
    \State $\opt{\vect{r}} \gets \{\}, q_{curr} \gets q_{max}$
    \For{$i \in \{\blocklen,\dots, 1\}$} \Comment{Decoding the optimal string $\opt{\vect{r}}$}
        \State $q_{curr}, t \gets \dppar{i}{q_{curr}}$
        \State $\opt{\vect{r}} \gets \opt{\vect{r}}\concat t$   
    \EndFor
    \State \Return $\text{reverse}(\opt{\vect{r}})$, $q_{max}$         
                
    \end{algorithmic}
\end{algorithm}
The parallelism step at line 6 in Algo.~\ref{alg:dpParallel} can be efficiently implemented using popular frameworks like PyTorch. With parallelism, the computational depth (i.e., the minimum number of sequential steps) reduces to $O(\max(|\dfastates|^2, |\dfastates| \times |\inalphaLLM{}|) + |\dfastates|^2 \times \blocklen)$. For regular expressions, where the number of states $|\dfastates|$ is a small constant, the computational depth becomes $O(|\inalphaLLM{}| + \blocklen)$, which is linear in both the vocabulary size $|\inalphaLLM{}|$ and the block length $\blocklen$.   

\section{Semi-Autoregressive}
\label{appen:semiauto}
In the semi-autoregressive setup, given an input $\vect{p} \in \inalphaLLM{m}$, the output $\vect{o} \in \inalphaLLM{m + \blocklen \times k}$ is generated over $k$ blocks, where each block is computed via a call to the single block diffusion model. The output of the $i$-th diffusion model call is $\vect{x}_i = \llm{m_i, n_i}(\vect{x}_{i-1})$, with $\vect{x}_0 = \vect{p}$ and the final output $\vect{o} = \vect{x}_k$. The input and output lengths for each block are defined as $m_i = m + (i - 1) \times \blocklen$ and $n_i = m + i \times \blocklen$ for all $1 \leq i \leq k$.
\begin{algorithm}[H]
    \caption{Semi-Autoregressive diffusion LLM Generation}
    \label{alg:dllm_gen}
    \begin{algorithmic}[1]
    \Require diffusion LLM $\llm{}$, prompt $\vect{p}$, answer length $n$, block length $\blocklen$, diffusion steps $T$, vocabulary $V$, number of blocks $k$.
    \State $\vect{x} \gets \vect{p}$ \Comment{Initialize $\vect{x}$ with input prompt $\vect{p}$}
    \State $\vect{r} \gets \{\}$ \Comment{Intialize the output string}
    \For{$i \in \{1, \dots, k\}$}
    \State $\vect{x}\concat\vect{r_i} \gets \text{Diffusion}(\vect{x}, m + (i-1)\times\blocklen, \blocklen, T, \inalphaLLM{})$ \Comment{$\vect{r_i}\in \inalphaLLM{\blocklen}$ is $i$-th output block}
    \State $\vect{r} \gets \vect{r}\concat\vect{r_i}$
    \State $\vect{x} \gets \vect{x}\concat\vect{r_i}$ \Comment{Compute the input prompt for the next block} 
    \EndFor
    \State \textbf{Return} $\vect{r}$             
    \end{algorithmic}
\end{algorithm}

\begin{algorithm}[H]
    \caption{Semi-Autoregressive Constrained diffusion LLM Generation}
    \label{alg:autoConstraint}
    \begin{algorithmic}[1]
    \Require diffusion LLM $\llm{}$, prompt $\vect{p}$, answer length $n$, block length $\blocklen$, diffusion steps $T$, vocabulary $V$, number of blocks $k$, regular expression $\regex$.
    \State $\dfastart, \live{\dfastates}, \transitions{} \gets \text{PreProcess}
    (\regex)$ \Comment{Pre-compute the dfa start state, live states and $\transitions{}$}
    \State $\vect{x} \gets \vect{p}$ 
    \Comment{Initialize $\vect{x}$ with input prompt $\vect{p}$}
    \State $\vect{r} \gets \{\}$ \Comment{Intialize the output string}
    \State $q_{curr} \gets \dfastart$ \Comment{Intialize the current dfa state the response is at}
    \For{$i \in \{1, \dots, k\}$}
    \State $\vect{x}\concat\vect{r_i}, q_{next} \gets \text{Diffusion}(\vect{x}, m + (i-1)\times\blocklen, \blocklen, T, \inalphaLLM{}, \live{Q}, \transitions{}, q_{curr})$ 
    \If{$q_{next}\not\in \live{\dfastates}$}
     \State \Return None \Comment{No valid completion}
    \EndIf
    \State $\vect{r} \gets \vect{r}\concat\vect{r_i}$
    \State $\vect{x} \gets \vect{x}\concat\vect{r_i}$ 
    \Comment{Compute the input prompt for the next block} 
    \State $q_{curr} \gets q_{next}$ \Comment{Update current DFA state for next block}
    \EndFor
    \State \textbf{Return} $\vect{r}$         
    \end{algorithmic}
\end{algorithm}

In the semi-autoregressive setting, after each block, we ensure that the output generated so far ends in a live state from $\live{\dfastates}$; otherwise, we return the None string (line 7, Algo.~\ref{alg:autoConstraint}). Additionally, we maintain a variable $q_{\text{curr}}$ to track the current DFA state at the end of each block. This state is then used as the starting state for the dynamic programming step in the constrained generation of the next block.
\section{Token Transitions Statistics}
\label{sec:dfa_stats}

\begin{table}[H]
\centering

\caption{Token Transitions Pre-Computation Statistics} 
\begin{tabular}
    {@{}l|c| r r r r@{}}
    \toprule
    & & \multicolumn{2}{c}{GSM-Symbolic} & \multicolumn{2}{c}{JSON-Mode}\\
    Model Family & $|V|$ & Time(s) & \#States & Time(s) & \#States \\
    \midrule
    LLaDA-8B & 126349  & 32.09 & 40 & 13.22 & 169.31 \\
    Dream-7B  & 151667 & 37.01 & 40 & 11.87 & 169.31 \\
    \bottomrule
\end{tabular}

\label{table:token_trans_stats}
\end{table}
In Table~\ref{table:token_trans_stats}, we report the precomputation time and the number of states in the DFA for both tasks. For JSON generation, different regular expressions are used for different schemas; therefore, we report the mean precomputation time and mean number of states. The maximum number of states and precomputation times across all questions are $455$ and $17.7$ (Dream) $21.3$ (LLaDA) seconds, respectively.

\section{GSM-Symbolic}
\label{sec:gsm_info}

\subsection{GSM-Symbolic Prompt}
\label{sec:gsm_prompts}
\lstdefinestyle{myGrammarStyle}{
    basicstyle=\scriptsize\ttfamily, 
    commentstyle=\color{green},
    keywordstyle=\color{blue},
    stringstyle=\color{orange},
    numbers=none, 
    numberstyle=\tiny\color{gray}, 
    breaklines=true, 
    frame=single, 
    framesep=3pt, 
    xleftmargin=5pt, 
    xrightmargin=5pt, 
    backgroundcolor=\color{yellow!0}, 
    tabsize=2, 
    captionpos=b, 
    aboveskip=5pt, 
    belowskip=5pt, 
    linewidth=0.9\linewidth, 
    escapeinside={(*@}{@*)}, 
}

\begin{lstlisting}[style=myGrammarStyle, caption= Prompt template for the GSM-Symbolic task~\cite{mirzadeh2024gsmsymbolicunderstandinglimitationsmathematical}.]
You are an expert in solving grade school math tasks. You will be presented with a grade-school math word problem with symbolic variables and be asked to solve it.

Before answering you should reason about the problem (using the <reasoning> field in the response described below). Intermediate symbolic expressions generated during reasoning should be wrapped in << >>.

Only output the symbolic expression wrapped in << >> that answers the question. The expression must use numbers as well as the variables defined in the question. You are only allowed to use the following operations: +, -, /, //, %, *, and **.

You will always respond in the format described below: 
Let's think step by step. <reasoning> The final answer is <<symbolic expression>>

There are {t} trees in the {g}. {g} workers will plant trees in the {g} today. After they are done, there will be {tf} trees. How many trees did the {g} workers plant today?

Let's think step by step. Initially, there are {t} trees. After planting, there are {tf} trees. The number of trees planted is <<tf - t>>. The final answer is <<tf - t>>.

If there are {c} cars in the parking lot and {nc} more cars arrive, how many cars are in the parking lot?

Let's think step by step. Initially, there are {c} cars. {nc} more cars arrive, so the total becomes <<c + nc>>. The final answer is <<c + nc>>.

{p1} had {ch1} {o1} and {p2} had {ch2} {o1}. If they ate {a} {o1}, how many pieces do they have left in total?

Let's think step by step. Initially, {p1} had {ch1} {o1}, and {p2} had {ch2} {o1}, making a total of <<ch1 + ch2>>. After eating {a} {o1}, the remaining total is <<ch1 + ch2 - a>>. The final answer is <<ch1 + ch2 - a>>.

{p1} had {l1} {o1}. {p1} gave {g} {o1} to {p2}. How many {o1} does {p1} have left?

Let's think step by step. {p1} started with {l1} {o1}. After giving {g} {o1} to {p2}, {p1} has <<l1 - g>> {o1} left. The final answer is <<l1 - g>>.

{question}

\end{lstlisting}
\label{prompt:gsm_prompt}

\subsection{GSM-Symbolic Regex}
\label{sec:gsm_regex}
\lstdefinestyle{myGrammarStyle}{
    basicstyle=\scriptsize\ttfamily, 
    commentstyle=\color{gray},
    keywordstyle=\color{blue},
    stringstyle=\color{orange},
    numbers=none, 
    numberstyle=\tiny\color{gray}, 
    breaklines=true, 
    frame=single, 
    framesep=3pt, 
    xleftmargin=5pt, 
    xrightmargin=5pt, 
    backgroundcolor=\color{yellow!4}, 
    tabsize=2, 
    captionpos=b, 
    aboveskip=5pt, 
    belowskip=5pt, 
    linewidth=0.9\linewidth, 
    escapeinside={(*@}{@*)}, 
}

\begin{lstlisting}[style=myGrammarStyle, caption=GSM-Symbolic Regex]
(?:(?:(?:(?:(?:[ -;=?-~\n]+))*(?:<<(?:(?:\ ))?(?:(?:(?:(?:(?:[a-j])|(?:[0-9]{1,3})|\((?:(?:(?:(?:[a-j])|(?:[0-9]{1,3})|\((?:(?:(?:(?:[a-j])|(?:[0-9]{1,3})|\((?:(?:(?:(?:[a-j])|(?:[0-9]{1,3})))(?:(?:(?:(?:\ ))?(?:(?:\+|\-|//|/|%|\*|\*\*))(?:(?:\ ))?(?:(?:(?:[a-j])|(?:[0-9]{1,3})))))*)\)))(?:(?:(?:(?:\ ))?(?:(?:\+|\-|//|/|%|\*|\*\*))(?:(?:\ ))?(?:(?:(?:[a-j])|(?:[0-9]{1,3})|\((?:(?:(?:(?:[a-j])|(?:[0-9]{1,3})))(?:(?:(?:(?:\ ))?(?:(?:\+|\-|//|/|%|\*|\*\*))(?:(?:\ ))?(?:(?:(?:[a-j])|(?:[0-9]{1,3})))))*)\)))))*)\)))(?:(?:(?:(?:\ ))?(?:(?:\+|\-|//|/|%|\*|\*\*))(?:(?:\ ))?(?:(?:(?:[a-j])|(?:[0-9]{1,3})|\((?:(?:(?:(?:[a-j])|(?:[0-9]{1,3})|\((?:(?:(?:(?:[a-j])|(?:[0-9]{1,3})))(?:(?:(?:(?:\ ))?(?:(?:\+|\-|//|/|%|\*|\*\*))(?:(?:\ ))?(?:(?:(?:[a-j])|(?:[0-9]{1,3})))))*)\)))(?:(?:(?:(?:\ ))?(?:(?:\+|\-|//|/|%|\*|\*\*))(?:(?:\ ))?(?:(?:(?:[a-j])|(?:[0-9]{1,3})|\((?:(?:(?:(?:[a-j])|(?:[0-9]{1,3})))(?:(?:(?:(?:\ ))?(?:(?:\+|\-|//|/|%|\*|\*\*))(?:(?:\ ))?(?:(?:(?:[a-j])|(?:[0-9]{1,3})))))*)\)))))*)\)))))*)\)))(?:(?:(?:(?:\ ))?(?:(?:\+|\-|//|/|%|\*|\*\*))(?:(?:\ ))?(?:(?:(?:[a-j])|(?:[0-9]{1,3})|\((?:(?:(?:(?:[a-j])|(?:[0-9]{1,3})|\((?:(?:(?:(?:[a-j])|(?:[0-9]{1,3})|\((?:(?:(?:(?:[a-j])|(?:[0-9]{1,3})))(?:(?:(?:(?:\ ))?(?:(?:\+|\-|//|/|%|\*|\*\*))(?:(?:\ ))?(?:(?:(?:[a-j])|(?:[0-9]{1,3})))))*)\)))(?:(?:(?:(?:\ ))?(?:(?:\+|\-|//|/|%|\*|\*\*))(?:(?:\ ))?(?:(?:(?:[a-j])|(?:[0-9]{1,3})|\((?:(?:(?:(?:[a-j])|(?:[0-9]{1,3})))(?:(?:(?:(?:\ ))?(?:(?:\+|\-|//|/|%|\*|\*\*))(?:(?:\ ))?(?:(?:(?:[a-j])|(?:[0-9]{1,3})))))*)\)))))*)\)))(?:(?:(?:(?:\ ))?(?:(?:\+|\-|//|/|%|\*|\*\*))(?:(?:\ ))?(?:(?:(?:[a-j])|(?:[0-9]{1,3})|\((?:(?:(?:(?:[a-j])|(?:[0-9]{1,3})|\((?:(?:(?:(?:[a-j])|(?:[0-9]{1,3})))(?:(?:(?:(?:\ ))?(?:(?:\+|\-|//|/|%|\*|\*\*))(?:(?:\ ))?(?:(?:(?:[a-j])|(?:[0-9]{1,3})))))*)\)))(?:(?:(?:(?:\ ))?(?:(?:\+|\-|//|/|%|\*|\*\*))(?:(?:\ ))?(?:(?:(?:[a-j])|(?:[0-9]{1,3})|\((?:(?:(?:(?:[a-j])|(?:[0-9]{1,3})))(?:(?:(?:(?:\ ))?(?:(?:\+|\-|//|/|%|\*|\*\*))(?:(?:\ ))?(?:(?:(?:[a-j])|(?:[0-9]{1,3})))))*)\)))))*)\)))))*)\)))))*))(?:(?:\ ))?>>)))+(?:(?:\.))?)
\end{lstlisting}
\label{gram:gsm_grammar}

\subsection{GSM-Symbolic Case Studies}
\label{sec:gsm_case_studies}
\textbf{Case Study 1:}







\begin{figure}[H]
    \centering
    \includegraphics[width=\linewidth]{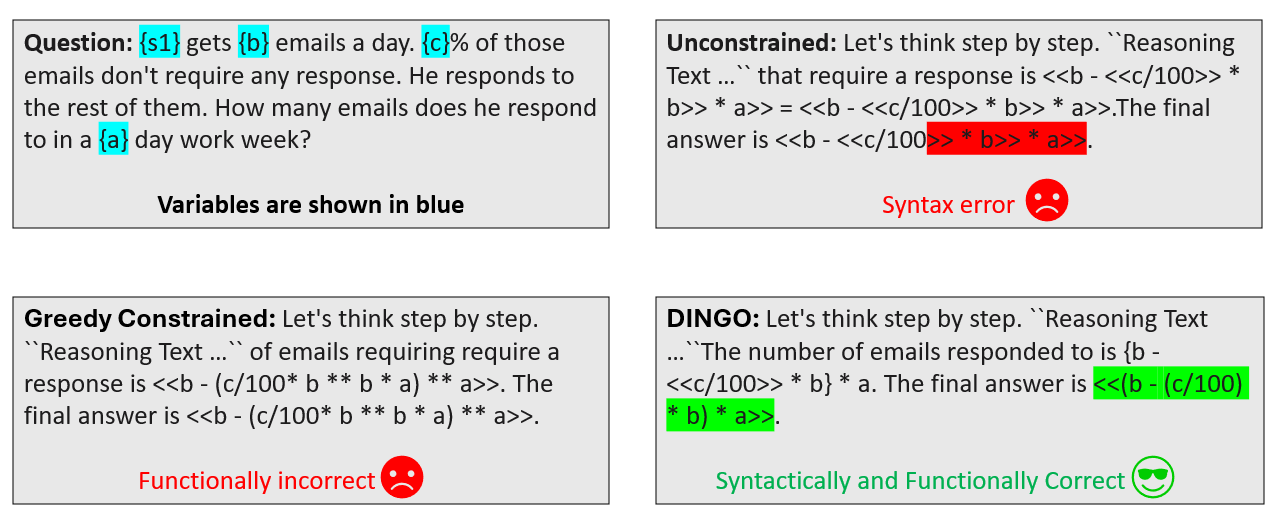}
    \caption{An example from the GSM-symbolic dataset (variables in blue), where unconstrained generation produces syntactically incorrect output, and greedy constrained generation yields a syntactically valid but incorrect answer. In contrast, \Tool{} generates the correct answer.}
    \label{fig:gsm_case_study1}
\end{figure}

\textbf{Case Study 2:}






\begin{figure}[H]
    \centering
    \includegraphics[width=\linewidth]{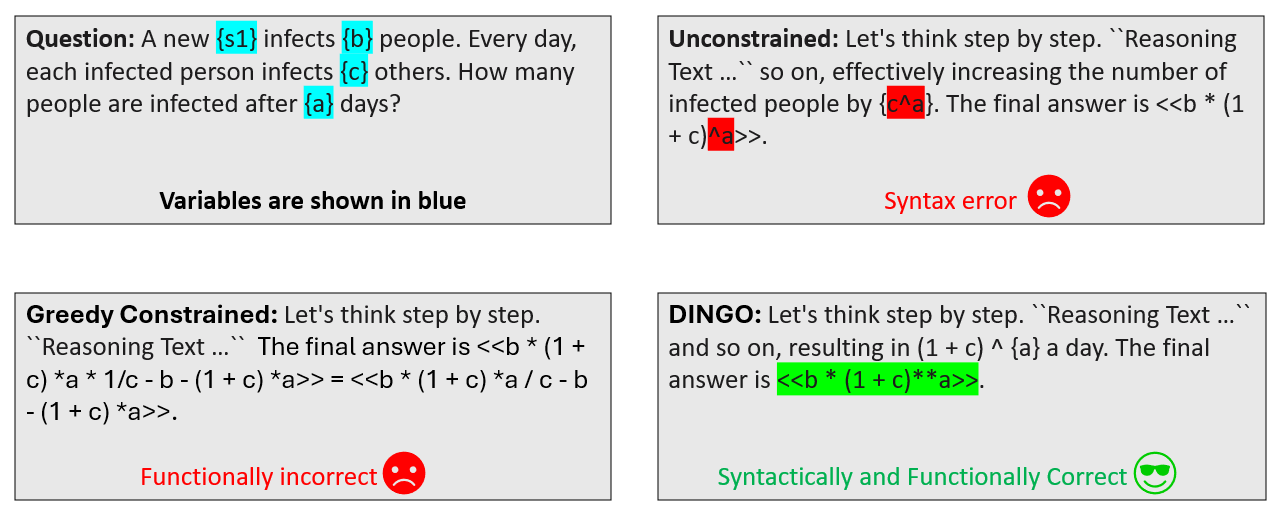}
    \caption{An example from the GSM-symbolic dataset (variables in blue), where unconstrained generation produces syntactically incorrect output, and greedy constrained generation yields a syntactically valid but incorrect answer. In contrast, \Tool{} generates the correct answer.}
    \label{fig:gsm_case_study2}
\end{figure}

\section{JSON-Mode }
\label{sec:jsonmode_info}

\subsection{JSON-Mode Example Prompt}
\label{sec:jsonmode_prompts}
\lstdefinestyle{myGrammarStyle}{
    basicstyle=\scriptsize\ttfamily, 
    commentstyle=\color{green},
    keywordstyle=\color{blue},
    stringstyle=\color{orange},
    numbers=none, 
    numberstyle=\tiny\color{gray}, 
    breaklines=true, 
    frame=single, 
    framesep=3pt, 
    xleftmargin=5pt, 
    xrightmargin=5pt, 
    backgroundcolor=\color{yellow!0}, 
    tabsize=2, 
    captionpos=b, 
    aboveskip=5pt, 
    belowskip=5pt, 
    linewidth=0.9\linewidth, 
    escapeinside={(*@}{@*)}, 
}

\begin{lstlisting}[style=myGrammarStyle, caption= Example JSON Prompt from the JSON-Mode-Eval task~\cite{jsoneval}. The prompt includes a system message that specifies a schema and a user message that explicitly instructs the model to output a JSON object following that schema with certain parameters. ]
You are a helpful assistant that answers in JSON. Here's the json schema you must adhere to:
<schema>
{'title': 'PromotionalCampaign', 'type': 'object', 'properties': {'campaignID': {'title': 'Campaign ID', 'type': 'string'}, 'productID': {'title': 'Product ID', 'type': 'string'}, 'startDate': {'title': 'Start Date', 'type': 'string', 'format': 'date'}, 'endDate': {'title': 'End Date', 'type': 'string', 'format': 'date'}, 'discountDetails': {'title': 'Discount Details', 'type': 'string'}}, 'required': ['campaignID', 'productID', 'startDate', 'endDate']}
</schema>

I'm organizing a promotional campaign for our new eco-friendly laundry detergent, which is part of our household products line. The campaign will start on June 1, 2023, and end on June 30, 2023. We're planning to offer a 15% discount on all purchases during this period. The campaign ID is CAMP123456, the product ID is PROD7891011, and the discount details are 15% off on all purchases.
Only output the JSON object, no other text or comments.
\end{lstlisting}
\label{prompt:json_prompt}

\subsection{JSON-Mode Example Regex}
\label{sec:json_regex}
\lstdefinestyle{myGrammarStyle}{
    basicstyle=\scriptsize\ttfamily, 
    commentstyle=\color{gray},
    keywordstyle=\color{blue},
    stringstyle=\color{orange},
    numbers=none, 
    numberstyle=\tiny\color{gray}, 
    breaklines=true, 
    frame=single, 
    framesep=3pt, 
    xleftmargin=5pt, 
    xrightmargin=5pt, 
    backgroundcolor=\color{yellow!4}, 
    tabsize=2, 
    captionpos=b, 
    aboveskip=5pt, 
    belowskip=5pt, 
    linewidth=0.9\linewidth, 
    escapeinside={(*@}{@*)}, 
}

\begin{lstlisting}[style=myGrammarStyle, caption=Regex for the JSON Schema in Appendix ~\ref{sec:json_regex} ]
\\{[ ]?"campaignID"[ ]?:[ ]?"([^"\\\\\\x00-\\x1F\\x7F-\\x9F]|\\\\["\\\\])*"[ ]?,[ ]?"productID"[ ]?:[ ]?"([^"\\\\\\x00-\\x1F\\x7F-\\x9F]|\\\\["\\\\])*"[ ]?,[ ]?"startDate"[ ]?:[ ]?"(?:\\d{4})-(?:0[1-9]|1[0-2])-(?:0[1-9]|[1-2][0-9]|3[0-1])"[ ]?,[ ]?"endDate"[ ]?:[ ]?"(?:\\d{4})-(?:0[1-9]|1[0-2])-(?:0[1-9]|[1-2][0-9]|3[0-1])"([ ]?,[ ]?"discountDetails"[ ]?:[ ]?"([^"\\\\\\x00-\\x1F\\x7F-\\x9F]|\\\\["\\\\])*")?[ ]?\\}
\end{lstlisting}
\label{gram:json_schema_grammar}

\subsection{JSON-Mode Case Studies}
\label{sec:jsonmode_case_studies}







\begin{figure}[H]
    \centering
    \includegraphics[width=\linewidth]{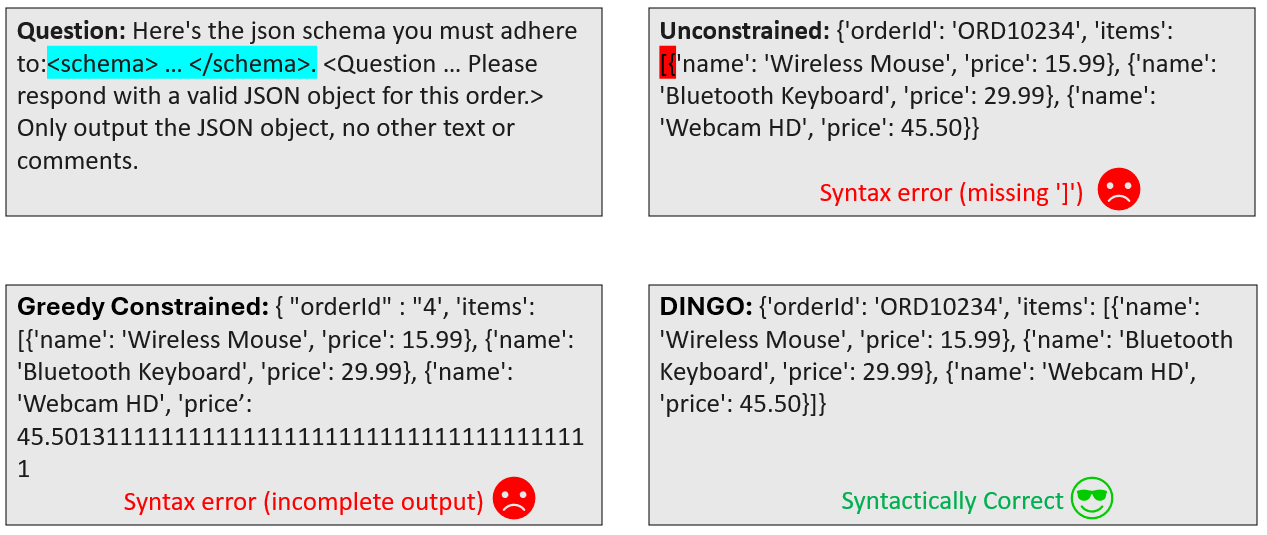}
    \caption{An example from JSON generation, where unconstrained generation produces a syntactically incorrect output, and greedy constrained generation yields a valid but incomplete prefix. In contrast, \Tool{} generates a syntactically correct answer.}
    \label{fig:json_case_study1}
\end{figure}
\label{gram:json_schema_case_study1}







\begin{figure}[H]
    \centering
    \includegraphics[width=\linewidth]{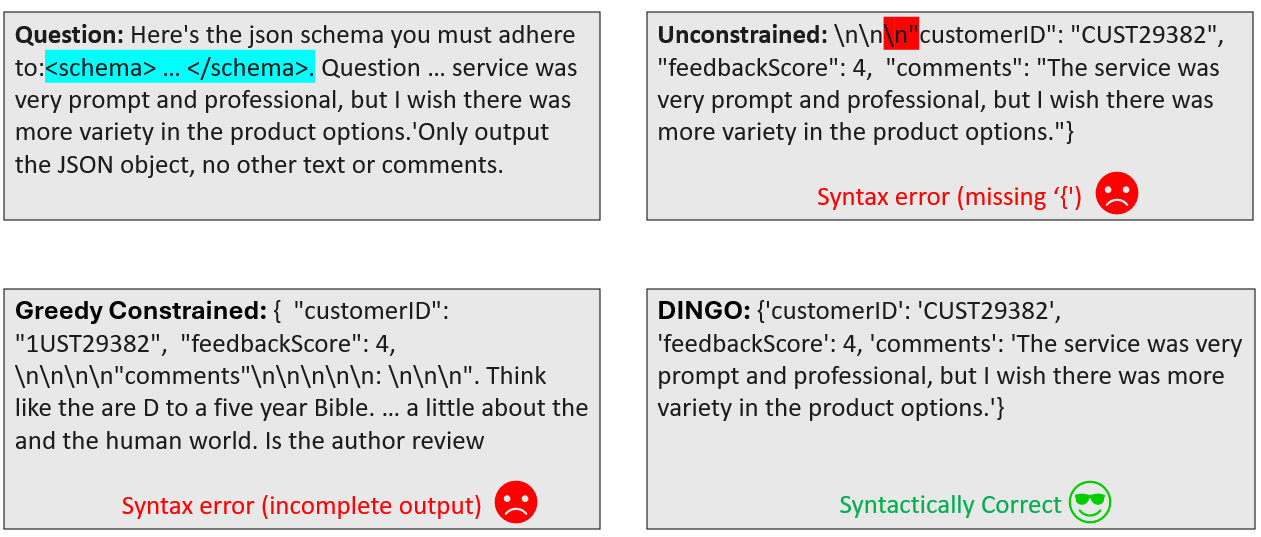}
    \caption{An example from JSON generation, where unconstrained generation produces a syntactically incorrect output, and greedy constrained generation yields a valid but incomplete prefix. In contrast, \Tool{} generates a syntactically correct answer.}
    \label{fig:json_case_study2}
\end{figure}
\label{gram:json_schema_case_study2}



\clearpage
\newpage

\section{Ablation Study on Number of Blocks for Diffusion LLM Generation (GSM-Symbolic)}
\label{sec:block_abl_gsm}
We run generation with a response length of 128, using 64 total diffusion steps, and each of 1, 2, and 8 blocks. Table ~\ref{tab:gsm_symbolic_comparison_blocks_ablation} presents the result. 

\begin{table*}[h]
    \centering
    \small
    \caption{Ablation Study on The Number of Diffusion Blocks for GSM-Symbolic}
    \begin{tabular}{@{}lllrrr@{}}
        \toprule
        \textbf{Model} & \textbf{\#Blocks} & \textbf{Method}  & \textbf{Acc. (\%)} & \textbf{Parse (\%)} &  \textbf{Time (s)} \\
        
\midrule
  & & \unconstrained{} & 20 & 54 & 23.66\\
& & \greedyconstrained{} & 26 & 94 & 23.7\\
& 1 & \bestofbaseline{} & 26 & 94 & 23.66\\
 & & \Tool{} & 29 & 100 & 23.73\\

 & & \unconstrained{} & 22 & 54 & 23.63\\
 & & \greedyconstrained{} & 30 & 96 & 23.81\\
 LLaDA-8B-I & 2 & \bestofbaseline{} & 30 & 96 & 23.65\\
 & & \Tool{} & 32 & 100 & 23.93\\

& & \unconstrained{} & 19 & 35 &  23.78\\
& & \greedyconstrained{} & 27 & 98 & 23.97 \\
 & 8 & \bestofbaseline{} & 27 & 98 & 23.8 \\
 & & \Tool{} & \textbf{32} & \textbf{100} & 23.92\\

\midrule
 & & \unconstrained{} & 28 & 69 & 23.56\\
 & & \greedyconstrained{} & 32 & 90 & 23.64\\
 & 1 & \bestofbaseline{} & 32 & 90 & 23.65\\
 & & \Tool{} & 34 & 100 & 23.67\\
 
 & & \unconstrained{} & 30 & 55 & 23.62\\
 & & \greedyconstrained{} & 33 & 87 & 23.71\\
 Dream-I-7B & 2 & \bestofbaseline{} & 33 & 87 & 23.62\\
 & & \Tool{} & 34 & 100 & 23.65\\

& & \unconstrained{} & 32 & 61 & 23.89\\
 & & \greedyconstrained{} & 34 & 93 & 24.01 \\
 & 8 & \bestofbaseline{} & 34 & 93 & 23.89 \\
 & & \Tool{} & \textbf{36} & \textbf{100} & 23.91 \\
 
\bottomrule
    \end{tabular}
    \label{tab:gsm_symbolic_comparison_blocks_ablation}
\end{table*}
\clearpage
\newpage

\section{Ablation Study on Number of Blocks for Diffusion LLM Generation (JSON-Mode)}
\label{sec:block_abl}
We run generation with a response length of 128, using 64 total diffusion steps, and each of 1, 2, and 8 blocks. Table ~\ref{tab:json_symbolic_comparison_blocks_ablation} presents the result. 

\begin{table*}[h]
    \centering
    \small
    \caption{Ablation Study on The Number of Diffusion Blocks for JSON-Mode.}
    \begin{tabular}{@{}lllrrr@{}}
        \toprule
        \textbf{Model} & \textbf{\#Blocks} & \textbf{Method}  & \textbf{Acc. (\%)} & \textbf{Parse (\%)} &  \textbf{Time (s)} \\
        
\midrule
  & & \unconstrained{} & 87 & 91 & 6.7\\
& & \greedyconstrained{} & 78 & 79 & 6.81\\
& 1 & \bestofbaseline{} & 99 & 99 & 6.73\\
 & & \Tool{} & 100 & 100 & 6.78\\

 & & \unconstrained{} & 84 & 92 & 6.72\\
 & & \greedyconstrained{} & 92 & 94 & 6.83\\
 LLaDA-8B-I & 2 & \bestofbaseline{} & 99 & 99 & 6.73\\
 & & \Tool{} & 100 & 100 & 6.86\\

& & \unconstrained{} & 84 & 89 &  6.73\\
& & \greedyconstrained{} & 98 & 98 & 6.87 \\
 & 8 & \bestofbaseline{} & \textbf{100} & \textbf{100} & 6.75 \\
 & & \Tool{} & 100 & 100 & 6.85\\

\midrule
 & & \unconstrained{} & 85 & 87 & 6.4\\
 & & \greedyconstrained{} & 30 & 30 & 6.51\\
 & 1 & \bestofbaseline{} & 91 & 93 & 6.43\\
 & & \Tool{} & \textbf{100} & \textbf{100} & 6.55\\
 
 & & \unconstrained{} & 79 & 82 & 6.47\\
 & & \greedyconstrained{} & 37 & 39 & 6.68\\
 Dream-I-7B & 2 & \bestofbaseline{} & 86 & 88 & 6.5\\
 & & \Tool{} & 100 & 100 & 6.63\\

& & \unconstrained{} & 70 & 74 & 6.44\\
 & & \greedyconstrained{} & 52 & 52 & 6.65 \\
 & 8 & \bestofbaseline{} & 86 & 89 & 6.46 \\
 & & \Tool{} & 100 & 100 & 6.67 \\
 
\bottomrule
    \end{tabular}
    \label{tab:json_symbolic_comparison_blocks_ablation}
\end{table*}
\clearpage
\newpage

\section{Ablation Study on Number of Steps for Diffusion LLM Generation (GSM-Symbolic)}
\label{sec:steps_abl_gsm}
We run generation with a response length of 128, 1 block, and each of 16, 32, 64, and 128 total diffusion steps. Table ~\ref{tab:gsm_symbolic_comparison_steps_ablation} presents the result. 

\begin{table*}[h]
    \centering
    \small
    \caption{Ablation Study on The Number of Diffusion Steps for GSM-Symbolic with Dream-I-7B}
    \begin{tabular}{@{}llrrr@{}}
        \toprule
        \textbf{\#Steps} & \textbf{Method}  & \textbf{Acc. (\%)} & \textbf{Parse (\%)} &  \textbf{Time (s)} \\
        
\midrule
  & \unconstrained{} & 6 & 20 & 5.99\\
 & \greedyconstrained{} & 13 & 78 & 6.18\\
 16 & \bestofbaseline{} & 13 & 78 & 5.99\\
 & \Tool{} & \textbf{18} & \textbf{100} & 6.09\\

 \midrule
 & \unconstrained{} & 18 & 48 & 11.96\\
 & \greedyconstrained{} & 25 & 87 & 12.06\\
 32 & \bestofbaseline{} & 25 & 87 & 11.96\\
 & \Tool{} & \textbf{28} & \textbf{100} & 12.03\\
\midrule
 & \unconstrained{} & 28 & 69 & 23.56\\
 & \greedyconstrained{} & 32 & 90 & 23.64\\
 64 & \bestofbaseline{} & 32 & 90 & 23.65\\
 & \Tool{} & \textbf{34} & \textbf{100} & 23.67\\

\midrule
 & \unconstrained{} & 31 & 74 & 47.83\\
 & \greedyconstrained{} & 30 & 89 & 47.88 \\
 128 & \bestofbaseline{} & 31 & 90 & 47.83 \\
 & \Tool{} & \textbf{33} & \textbf{100} & 47.86 \\
 
\bottomrule
    \end{tabular}
    \label{tab:gsm_symbolic_comparison_steps_ablation}
\end{table*}
\clearpage
\newpage

\section{Ablation Study on Number of Steps for Diffusion LLM Generation (JSON-Mode)}
\label{sec:steps_abl_jsonmode}
We run generation with a response length of 128, 1 block, and each of 16, 32, 64, and 128 total diffusion steps. Table ~\ref{tab:json_mode_comparison_steps_ablation} presents the result. 

\begin{table*}[h]
    \centering
    \small
    \caption{Ablation Study on The Number of Diffusion Steps for JSON-Mode with Dream-I-7B}
    \begin{tabular}{@{}llrrr@{}}
        \toprule
        \textbf{\#Steps} & \textbf{Method}  & \textbf{Acc. (\%)} & \textbf{Parse (\%)} &  \textbf{Time (s)} \\
        
\midrule
  & \unconstrained{} & 54 & 59 & 1.51\\
 & \greedyconstrained{} & 32 & 32 & 1.62\\
 16 & \bestofbaseline{} & 68 & 71 & 1.52\\
 & \Tool{} & \textbf{100} & \textbf{100} & 1.6\\

 \midrule
 & \unconstrained{} & 67 & 71 & 3.23\\
 & \greedyconstrained{} & 35 & 35 & 3.35\\
 32 & \bestofbaseline{} & 78 & 82 & 3.24\\
 & \Tool{} & \textbf{100} & \textbf{100} & 3.31\\

\midrule
 & \unconstrained{} & 85 & 87 & 6.4\\
 & \greedyconstrained{} & 30 & 30 & 6.51\\
 64 & \bestofbaseline{} & 91 & 93 & 6.43\\
 & \Tool{} & \textbf{100} & \textbf{100} & 6.55\\

\midrule
 & \unconstrained{} & 85 & 87 & 13.42\\
 & \greedyconstrained{} & 46 & 46 & 13.53 \\
 128 & \bestofbaseline{} & 95 & 97 & 13.43 \\
 & \Tool{} & \textbf{100} & \textbf{100} & 13.51 \\
 
\bottomrule
    \end{tabular}
    \label{tab:json_mode_comparison_steps_ablation}
\end{table*}
\clearpage
\newpage
    
\end{document}